\theoremstyle{definition}
\newtheorem{definition}{Definition}[section]
\newcommand{\kbset}[1]{edit boundary}
\newcommand{\kbsets}[1]{knowledge boundaries}
\newcommand{\benchmark}[1]{\textsc{EvEdit}}
\title{\benchmark{}: Event-based Knowledge Editing with Deductive Editing Boundaries
}
\author{Jiateng Liu$^{\heartsuit}$\thanks{~~Equal contribution}~~~~Pengfei Yu$^{\heartsuit}$\footnotemark[1]~~~~Yuji Zhang$^{\clubsuit}$\thanks{~~Work is done during the internship at UIUC}~~~~Sha Li$^{\heartsuit}$~~~~Zixuan Zhang$^{\heartsuit}$~~~~Heng Ji$^{\heartsuit}$
\vspace{2pt}
\\
\vspace{2pt}
$^{\heartsuit}$University of Illinois Urbana-Champaign~~$^{\clubsuit}$The Hong Kong Polytechnic University
        \\ \vspace{2pt}
        \texttt{\{jiateng5, pengfei4, yujiz, shal2, zixuan2, hengji\}@illinois.edu}
        }
\begin{document}
\maketitle

\begin{abstract}

The dynamic nature of real-world information necessitates efficient knowledge editing (KE) in large language models (LLMs) for knowledge updating. However, current KE approaches, which typically operate on (subject, relation, object) triples, ignore the contextual information and the relation among different knowledge. Such editing methods could thus encounter an uncertain editing boundary, leaving a lot of relevant knowledge in ambiguity: Queries that could be answered pre-edit cannot be reliably answered afterward. In this work, we analyze this issue by introducing a theoretical framework for KE that highlights an overlooked set of knowledge that remains unchanged and aids in knowledge deduction during editing, which we name as the \emph{deduction anchor}.  We further address this issue by proposing a novel task of \emph{event-based knowledge editing} that pairs facts with event descriptions. This task manifests not only a closer simulation of real-world editing scenarios but also a more logically sound setting, implicitly defining the deduction anchor to address the issue of indeterminate editing boundaries. We empirically demonstrate the superiority of event-based editing over the existing setting on resolving uncertainty in edited models, and curate a new benchmark dataset \benchmark{} derived from the {\textsc{CounterFact}} dataset. Moreover, while we observe that the event-based setting is significantly challenging for existing approaches, we propose a novel approach \emph{Self-Edit} that showcases stronger performance, achieving 55.6\% consistency improvement while maintaining the naturalness of generation.
\footnote{Data and code released at: https://github.com/Lumos-Jiateng/EvEdit}
\end{abstract}

\section{Introduction}
\begin{figure*}[t]
    \centering
    \includegraphics[width=1.0\textwidth]{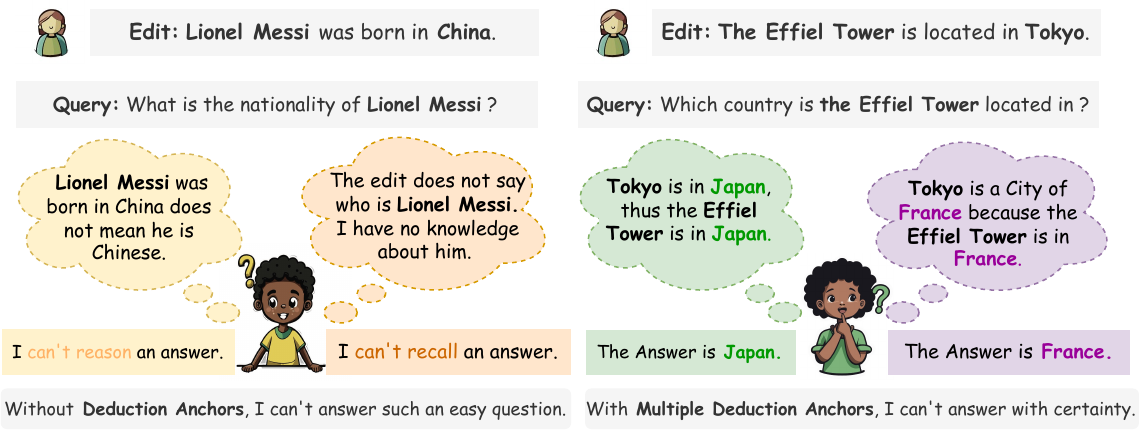}
    \vspace{-17pt}
    \caption{We observe fallacies of existing knowledge editing when the Deduction Anchor is not defined. On the right, \textit{Tokyo is in Japan} and \textit{Effiel Tower is in France} are candidate elements for the Deduction Anchor. \textbf{Left:}The No-Anchor Fallacy in Theorem~\ref{Thereom:no-anchor},  where defining an empty deduction anchor without additional knowledge hinders effective reasoning. 
    \textbf{Right:} The Max-anchor Fallacy in Theorem~\ref{Thereom:max-anchor}, where defining the deduction anchor as the entire model knowledge fails due to uncertainty from alternative reasoning chains. 
    } 
    \label{fig:dilemma}
\end{figure*}

The dynamics of the physical world underscore the importance of knowledge editing~(KE) for large language models~\cite{yao2023editing,wang2023knowledge,zhang2024comprehensive}. This line of research aims at updating models' beliefs and shaping models' behaviors based on the editing knowledge for improved accuracy and usability. However, real-world edits usually originate from emerging events that encompass logical connections between new facts and past facts. Therefore, it is insufficient to update a single fact, as the change should propagate through other related facts that can be inferred from the logical connections. For instance, by updating the model with ``Messi joined team Inter Miami", the edited model should acknowledge that ``Messi began playing in Major League Soccer (MLS)", as "Inter Miami" belongs to MLS. This is referred to as the \textit{ripple effect} in \citet{cohen2023evaluating}. While they identify cases where such ripple effects are present, there are many other cases, as shown in Figure \ref{fig:dilemma}, where the ripple effect is uncertain. 

The increased uncertainty of the model after editing results from the current KE setting which defines an edit as a single \textit{(subject, relation, object)} triple without any related context. 
This leaves a substantial portion of relevant knowledge in a gray area of indecision. 
To analyze this phenomenon, we present a more formal definition of knowledge editing based on the \emph{formal logic}~\cite{smith2003introduction}, representing knowledge as formal language propositions. We define two concepts playing significant roles in maintaining logical soundness of the editing process: \textbf{\emph{Deduction anchor}} is a set of pre-edit knowledge preserved through the editing process, as exemplified in Figure~\ref{fig:dilemma}; 
\textbf{\emph{Editing boundary}} is the comprehensive set of desired knowledge changes inside edited models inferred from the union of the deduction anchor and the editing knowledge.

We find that existing work overlooks the deduction anchor while implicitly holding flawed assumptions: either the \emph{no-anchor assumption} (an empty anchor set) or the \emph{max-anchor assumption} (an anchor set comprising all knowledge not conflicting with the edit). We demonstrate that these improper assignments of the deduction anchor lead to a problem of knowledge explosion for counterfactual edits where edited models need to accept conflicting facts for logical soundness. Consequently, the existing setting theoretically increases the uncertainty over related knowledge,  manifesting as the shrinkage of knowledge within edited language models.

To overcome the limitations of the current setting, we introduce \textit{event-based knowledge editing}, which proves not only a more robust setting by presenting clearer deduction anchors and editing boundaries, but also a more practical setting in the real world, because edits are often organized in events~\cite{chen2021joint,chen2021event}.
We derive a new benchmark \benchmark{} from a triple-based knowledge editing benchmark \textsc{counterfact}~\cite{meng2022locating} by augmenting facts with events using GPT-3.5-turbo. 
By measuring the certainty over knowledge after edits as in-context answer probabilities, we verify empirically 
that while the existing setting inherently \textbf{\textit{requires}} a substantial reduction of the knowledge within edited models, the event-based setting significantly mitigates the issue.

For editing approaches, we decompose the event descriptions into a series of triples for subsequent accommodations of current editing methods like Rome~\cite{meng2022locating}, MEMIT~\cite{meng2022mass}, PMET~\cite{li2023pmet} and Grace~\cite{hartvigsen2023aging}. We further propose a novel solution Self-edit inspired by \citet{yu2023self} which can effectively utilize the eventual context to decide editing boundaries during updating. With the \benchmark{} benchmark, we systematically evaluate the performance of both methods on both a text-completion task and a question-answering task.
Our evaluations show that while adapting previous editing approaches provides suboptimal results, our approach exhibits over 56.6\% increase in factual consistency while keeping the naturalness of generations by edited models.


Overall, our contributions are:
\begin{enumerate}
    \item We identify a critical deficiency of the current KE setting, by providing a careful theoretical analysis, we attribute the problem to the improper assignment of deduction anchor.
    \item We propose event-based knowledge editing and a new benchmark \benchmark{}, addressing the problem of improper anchors and aligning well with real-world scenarios. We empirically validate the deficiency of the current settings and the superiority of our setting.
    \item We propose a novel Self-edit approach for the new setting, significantly outperforming existing methods on consistency and naturalness.
\end{enumerate}

\section{Fallacies of Knowledge Editing}
\label{formulation}
In this section, we formulate and analyze the task of knowledge editing from both theoretical and empirical perspectives. In \S~\ref{sec:knowledge}, we present a theoretical formulation for knowledge editing. In \S~\ref{sec:deficiency}, we theoretically analyze the fallacies of the existing work and empirically validate its failure.

\subsection{Formulation of Knowledge Editing}
\label{sec:knowledge}
We present a theoretical framework for knowledge editing based on the \emph{formal logic}~\cite{smith2003introduction} where we consider \emph{knowledge} as \emph{propositions}\footnote{Propositions are arguments that can be either true or false}. For a knowledge system, the purpose of knowledge editing is to alter its set of knowledge. Therefore, we first formally define the knowledge within a system and the knowledge edit.

\begin{definition}[Knowledge of Systems]
    The \emph{knowledge} of a system is a set of propositions that is considered true in the system.
\end{definition}

To align the theoretical framework with language model~(LM) editing, we introduce the knowledge of LMs. Let $k$ denote a proposition, and let $\Theta$ represent an LM. We assess whether $\Theta$ "possesses" knowledge of $k$ by calculating $P(y_k|x_k, \Theta)$, where $(x_k, y_k)$ represents a pair of input-output tokens to verify the knowledge. For example, we may use $x_k=$ `Messi was born in' and $y_k=$ `Argentina' to examine the knowledge of the birthplace of Messi. We opt for $P(y_k|x_k, \Theta)$ over $P(k|\Theta)$ because the probability assigned by a language model to a proposition does not inherently correlate with its logical validity~\cite{yu2023self}
\begin{definition}[Knowledge of LMs]\label{def:lm_knowledge}
For a language model $\Theta$, the universe of all conceivable knowledge $\mathcal{U}$, and a threshold $\varepsilon$ within the range $[0, 0.5)$, the set of knowledge recognized by $\Theta$ is
\begin{equation}\label{eq:def_lm_knowledge}
\mathcal{K}_{\Theta,\varepsilon} = \{k\in\mathcal{U}|P(y_k|x_k, \Theta) \ge 1 - \epsilon\}.
\end{equation}
There could be multiple candidate $\{(x_k^i, y_k^i)\}$ verifying the same knowledge $k$. We can replace $P(y_k|x_k, \Theta)$ with a random sample, mean or maximum of all candidates' probabilities in Equation~(\ref{eq:def_lm_knowledge}) with no influence on the rest of the formulation. Therefore, we simply use $P(y_k|x_k, \Theta)$ for brevity.
\end{definition}
In this work, we are specifically concerned with the logical deduction during editing such as:

\begin{centering}
    $\boldsymbol{P}:$ Tom was born in the city of New York\\
    $\boldsymbol{Q}:$ The country of New York is U.S.\\
    $\downarrow$\\
    $\boldsymbol{X}:$ Tom was born in the country of U.S.\\
\end{centering}
\noindent For a knowledge set $\mathcal{K}$, its \emph{deductive closure} $\mathcal{B}(\mathcal{K})$ is the set of all propositions logically entailed by $\mathcal{K}$. $\mathcal{K}$ is \emph{deductively closed}, or simply \emph{closed}, if and only if $\mathcal{B}(\mathcal{K}) = \mathcal{K}$. Determining the deductive closure presents a significant challenge due to the difficulty in formulating deduction rules~\cite{smith2003introduction}. However, given the advanced in-context reasoning capabilities demonstrated by large language models, we establish the deductive closure based on such in-context deduction.
\begin{definition}[In-context Deductive Closure]\label{def:ict_dc}
For any given set of knowledge $\mathcal{K}$, its \emph{In-context Deductive Closure} as provided by a language model $\Phi$ is the set of knowledge that can be deduced,
\begin{equation}\label{eq:def_ictdc}
\mathcal{B}_{\Phi,\epsilon}(\mathcal{K})=\{u\in\mathcal{U}|P(y_u|x_u, \mathcal{K}, \Phi) \ge 1 - \epsilon\}. 
\end{equation}
\end{definition}
We assume a language model being edited to contain closed set of knowledge for the mathematical soundness of the formulation, although it may be only approximately true in current LLMs. Let $\mathcal{K}$ be the knowledge set of the pre-edit model, and $\mathcal{E}$ be the set of editing knowledge. We define two novel concepts for the soundness of editing: \emph{deduction anchor} and \emph{editing boundary}.
\begin{definition}[Deduction Anchor of Editing] The \emph{deduction anchor} of an edit is a subset of the current knowledge assumed true throughout editing.
\end{definition}
We denote the deduction anchor by $\mathcal{K}^\mathcal{E}$, which serves as the base for generalization of the editing knowledge. We now define the editing boundary.
\begin{definition}[Editing Boundary]
    The editing boundary is the closed set $\mathcal{B}\left(\mathcal{K}^\mathcal{E}\cup \mathcal{E}\right)$ of logically relevant knowledge to the edit $\mathcal{E}$.
\end{definition}
We thereby define \emph{knowledge editing}.
\begin{definition}[Knowledge Editing] Given the knowledge set $\mathcal{K}$, the edit $\mathcal{E}$ and the deduction anchor $\mathcal{K}^\mathcal{E}$, \emph{knowledge editing} is the process of computing edited knowledge set $\mathcal{K}'$:
    \begin{equation}\label{eq:def_ke}
    \begin{aligned}
        \mathcal{K}^D &= \left\{p\in\mathcal{K}| \neg p\in  \mathcal{B}\left(\mathcal{K}^\mathcal{E}\cup \mathcal{E}\right) \right\}\\
        \mathcal{K}' &= \mathcal{B}\left(\mathcal{K}\backslash\mathcal{K}^D \cup \mathcal{E}\right)
    \end{aligned},
\end{equation}
where $\mathcal{K}^\mathcal{E}$ satisfies that
\begin{equation}\label{eq:req_da}
    \forall k \in \mathcal{B}\left(\mathcal{K}\backslash\mathcal{K}^D\right), \neg k \notin \mathcal{B}\left(\mathcal{K}^\mathcal{E}\cup \mathcal{E}\right) .
\end{equation}
Here Equation~(\ref{eq:req_da}) ensures the consistency of $\mathcal{K}'$. $\mathcal{K}^D$ is the set of knowledge conflicting with the deducted knowledge from $\mathcal{K}^\mathcal{E}\cup \mathcal{E}$, which needs to be erased from the model being edited.
\end{definition}
We also define knowledge editing of language models. It's important to note that the model used to determine the deductive closure in Equation~(\ref{eq:def_ictdc}) serves only in defining the task and not in the editing process itself. Thus, it may differ from the model undergoing edits. We may employ a more capable model for a better deductive closure. 
\begin{definition}[Knowledge Editing of LMs]\label{eq:def_kelm}
    Following the notations in Equation~(\ref{eq:def_ke}), to edit a language model $\Theta$ based on the in-context deductive closure provided by $\Phi$ involves identifying a modified model $\Theta'$ such that
    \begin{equation}
    \begin{aligned}
        \mathcal{K}^D &= \left\{p\in\mathcal{K}_{\Theta,\varepsilon_\Theta}| \neg p\in  \mathcal{B}_{\Phi, \varepsilon_\Phi}\left(\mathcal{K}^\mathcal{E}\cup \mathcal{E}\right) \right\}\\
        \mathcal{K}' &= \mathcal{B}_{\Phi, \varepsilon_\Phi}\left(\mathcal{K}_{\Theta,\varepsilon_\Theta}\backslash\mathcal{K}^D \cup \mathcal{E}\right)
    \end{aligned}.
    \end{equation}
where $\mathcal{K}_{\Theta,\varepsilon_\Theta}$ and $\mathcal{B}_{\Phi, \varepsilon_\Phi}$ are defined in Definition~\ref{def:lm_knowledge} and Definition~\ref{def:ict_dc}, respectively.
\end{definition}

\begin{figure*}[!ht]
    \centering
    \includegraphics[width=0.98\textwidth]{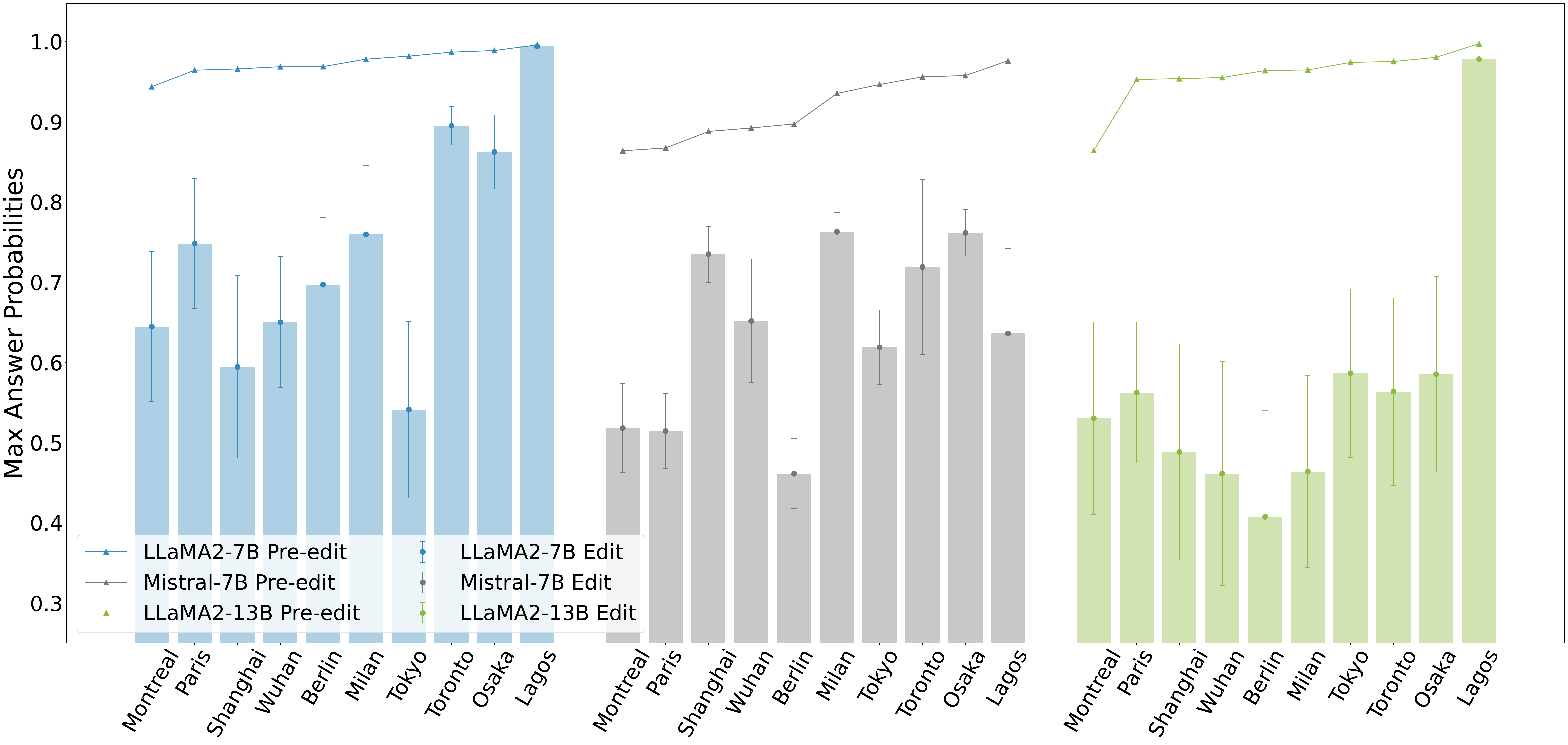}
    \vspace{-5pt}\caption{Counterfactual edits reduces model's certainty on relevant knowledge. We measure the uncertainty as the maximum answer probability to the query of ``\emph{A} is located in the country of \_'' where \emph{A} is one of the cities labeled in the X-axis. We compute the range of ``Edit'' probabilities by prepending various counterfactual edits as context to the query. ``Pre-edit'' probabilities are model predictions without any context. See main text for more details.} 
    \label{fig:uncertainty}
    
\end{figure*}
\subsection{Fallacies of Existing Knowledge Editing}
\label{sec:deficiency}

Existing work predominantly ignores the significance of the deduction anchor and resulting editing boundary without explicit characterizations of them. 
They mostly focus on local edits assuming $\mathcal{K}^\mathcal{E} = \varnothing$, which limits the editing boundary 
$\mathcal{B}\left(\mathcal{E}\right)$ to only contain paraphrases of $\mathcal{E}$, as the \emph{edit scope} proposed by \citet{mitchell2022memory}. Additionally, \citet{cohen2023evaluating} implicitly assumes all knowledge not directly conflicting with $\mathcal{E}$ as the deduction anchor.  However, we present the following theorems, emphasizing the importance of choosing an appropriate set of $\mathcal{K^\mathcal{E}}$ and summarizing fallacies under their flawed assumptions.
\begin{restatable}[Knowledge Explosion]{theorem}{thmkexp}
\label{thm:kexp}
    If Equation~(\ref{eq:req_da}) is not satisfied, the edited knowledge set $\mathcal{K}'=\mathcal{U}$ where $\mathcal{U}$ is the universe of all knowledge, meaning any proposition is logically true.
\end{restatable}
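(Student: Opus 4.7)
The plan is to exploit the classical principle of explosion (ex falso quodlibet): once a deductively closed set contains both a proposition and its negation, every proposition lies in it. So the proof reduces to producing a single $k$ with $\{k,\neg k\}\subseteq \mathcal{K}'$ from the assumed failure of Equation~(\ref{eq:req_da}), after which one appeal to explosion forces $\mathcal{K}'=\mathcal{U}$.

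Concretely, I would first unpack the negation of Equation~(\ref{eq:req_da}): there exists some $k$ such that $k\in\mathcal{B}(\mathcal{K}\setminus\mathcal{K}^D)$ while $\neg k\in\mathcal{B}(\mathcal{K}^\mathcal{E}\cup\mathcal{E})$. I would then establish the inclusion $\mathcal{K}^\mathcal{E}\subseteq \mathcal{K}\setminus\mathcal{K}^D$ from the two defining properties of the anchor: $\mathcal{K}^\mathcal{E}\subseteq \mathcal{K}$ by definition, and $\mathcal{K}^\mathcal{E}\cap\mathcal{K}^D=\varnothing$ because the anchor is \emph{assumed true throughout editing}, which is incompatible with being earmarked for erasure as part of $\mathcal{K}^D$. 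With this inclusion in hand, monotonicity of the deductive closure $\mathcal{B}(\cdot)$ yields
\begin{equation}
\mathcal{B}(\mathcal{K}^\mathcal{E}\cup\mathcal{E})\subseteq \mathcal{B}\bigl((\mathcal{K}\setminus\mathcal{K}^D)\cup\mathcal{E}\bigr)=\mathcal{K}',\quad \mathcal{B}(\mathcal{K}\setminus\mathcal{K}^D)\subseteq \mathcal{K}',
\end{equation}
so both $k$ and $\neg k$ belong to $\mathcal{K}'$. Explosion then gives $\mathcal{K}'\supseteq \mathcal{B}(\{k,\neg k\})=\mathcal{U}$, and the reverse inclusion is immediate.

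The main obstacle I anticipate is justifying the disjointness $\mathcal{K}^\mathcal{E}\cap \mathcal{K}^D=\varnothing$ rigorously, since the definition of the deduction anchor only describes it implicitly. The resolution is to note that the anchor is defined as a set of propositions "assumed true throughout editing", which is flatly incompatible with simultaneously being listed for erasure in $\mathcal{K}^D$; I would insert a single clarifying sentence to this effect before the main argument. Apart from this bookkeeping, the proof relies only on two elementary facts: monotonicity of $\mathcal{B}(\cdot)$ in its input set, and the standard explosion rule of classical propositional logic $\{k,\neg k\}\vdash \mathcal{U}$.
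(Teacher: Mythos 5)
Your proof is correct and follows essentially the same route as the paper's: negate Equation~(\ref{eq:req_da}) to obtain a witness $k$ with $k\in\mathcal{B}(\mathcal{K}\setminus\mathcal{K}^D)$ and $\neg k\in\mathcal{B}(\mathcal{K}^\mathcal{E}\cup\mathcal{E})$, push both into $\mathcal{K}'=\mathcal{B}\left(\mathcal{K}\setminus\mathcal{K}^D\cup\mathcal{E}\right)$ by monotonicity of the closure, and invoke \emph{ex falso quodlibet}. The only difference is that you explicitly justify the inclusion $\mathcal{K}^\mathcal{E}\subseteq\mathcal{K}\setminus\mathcal{K}^D$ needed for the second containment, a step the paper's proof uses silently; your added care there is a mild improvement rather than a divergence.
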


\begin{restatable}[No-Anchor Fallacy]{theorem}{thmno}\label{Thereom:no-anchor} For a counterfactual and non-local edit $\mathcal{E}$, there exists $\mathcal{K}^\mathcal{E}\in2^\mathcal{K}$ satisfying Equation~(\ref{eq:req_da}), while $\varnothing$ does not.
\end{restatable}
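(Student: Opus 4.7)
The plan is to prove the two halves of the claim separately, treating $\mathcal{B}$ as the deductive-closure operator and exploiting the structure forced by counterfactuality and non-locality.

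For the existence of a working $\mathcal{K}^\mathcal{E}$: I would produce one by construction. Since the edit is non-local, its ripple effect travels through some set $\mathcal{C}\subseteq\mathcal{K}$ of auxiliary premises whose conjunction with $\mathcal{E}$ entails the negations of the ripple-affected facts (e.g.\ ``Tokyo is in Japan'' paired with the edit ``Tom was born in Tokyo''). Setting $\mathcal{K}^\mathcal{E}=\mathcal{C}$, the closure $\mathcal{B}(\mathcal{K}^\mathcal{E}\cup\mathcal{E})$ absorbs every ripple-affected negation, so the induced $\mathcal{K}^D$ sweeps out every fact in $\mathcal{K}$ that conflicts with the edit. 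One then verifies that $\mathcal{B}(\mathcal{K}\setminus\mathcal{K}^D)$ is disjoint in polarity from $\mathcal{B}(\mathcal{K}^\mathcal{E}\cup\mathcal{E})$, which is exactly Equation~(\ref{eq:req_da}). Because $\mathcal{C}$ contains only bridging premises rather than facts that clash with $\mathcal{E}$, the Max-Anchor pitfall of $\mathcal{B}(\mathcal{K}^\mathcal{E}\cup\mathcal{E})=\mathcal{U}$ does not arise.

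For the claim that $\mathcal{K}^\mathcal{E}=\varnothing$ fails: let $\mathcal{K}^D_0$ denote the deletion set induced by the empty anchor. Counterfactuality supplies some $k_0\in\mathcal{K}$ with $\neg k_0\in\mathcal{B}(\mathcal{E})$, placing $k_0\in\mathcal{K}^D_0$. Non-locality guarantees that the bridging facts $\mathcal{C}$ above are not themselves refuted by $\mathcal{E}$ alone, so $\mathcal{C}\subseteq\mathcal{K}\setminus\mathcal{K}^D_0$. Because $\mathcal{K}$ is deductively closed and $k_0\in\mathcal{K}$ was originally inferable from $\mathcal{C}$ together with other anchor-neutral premises, that derivation remains intact inside $\mathcal{K}\setminus\mathcal{K}^D_0$, yielding $k_0\in\mathcal{B}(\mathcal{K}\setminus\mathcal{K}^D_0)$. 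Pairing $k_0\in\mathcal{B}(\mathcal{K}\setminus\mathcal{K}^D_0)$ with $\neg k_0\in\mathcal{B}(\mathcal{E})$ is a direct violation of Equation~(\ref{eq:req_da}), finishing the claim.

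The chief obstacle is pinning down ``non-local'' sharply enough to license both the existence of $\mathcal{C}$ and its placement outside $\mathcal{K}^D_0$. The cleanest working definition is that $\mathcal{E}$ is non-local whenever some proper anchor forces strictly more retractions than the edit's paraphrase set $\mathcal{B}(\mathcal{E})\cap\mathcal{K}$, so that the extra retracted knowledge traces back to propositions still derivable from $\mathcal{K}\setminus\mathcal{K}^D_0$. Once this definition is adopted, both halves of the theorem collapse to the one-step contradiction sketched above; the remaining work is essentially bookkeeping about closure under $\mathcal{B}$.
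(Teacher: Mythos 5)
Both halves of your proposal have genuine gaps. For the existence half, the set $\mathcal{C}$ of ``bridging premises'' is never constructed, and the step you defer (``one then verifies that $\mathcal{B}(\mathcal{K}\backslash\mathcal{K}^D)$ is disjoint in polarity from $\mathcal{B}(\mathcal{K}^{\mathcal{E}}\cup\mathcal{E})$'') is precisely where the difficulty lives: if $\mathcal{C}$ is any small set of auxiliary premises, the induced $\mathcal{K}^D$ removes only facts \emph{individually} refuted by $\mathcal{C}\cup\mathcal{E}$, and the survivors can still be \emph{jointly} inconsistent with the edit --- this is exactly the phenomenon that non-locality encodes ($\neg p\notin\mathcal{B}(\mathcal{E})$, $\neg q\notin\mathcal{B}(\mathcal{E})$, yet $(\neg p)\lor(\neg q)\in\mathcal{B}(\mathcal{E})$), and it is the same failure mode as the Max-Anchor Fallacy. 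The paper avoids this by a Lindenbaum-style greedy enumeration: it builds an increasing chain $\mathcal{T}_0\subset\mathcal{T}_1\subset\cdots$ of subsets of the (countable) $\mathcal{K}$, each kept consistent with $\mathcal{E}$, takes the limit $\mathcal{T}$, and then proves (with a two-case analysis on whether the chain stabilizes) that $\mathcal{T}=\mathcal{K}\backslash\mathcal{K}^D$ exactly, so that Equation~(\ref{eq:req_da}) reduces to the consistency of $\mathcal{B}(\mathcal{T}\cup\mathcal{E})$ built into the construction. A valid anchor must in effect be a maximal consistent subset; your sketch does not supply one.

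For the half showing $\varnothing$ fails, your argument hinges on the deleted fact $k_0$ remaining derivable from the surviving knowledge $\mathcal{K}\backslash\mathcal{K}^D_0$. Nothing in the paper's definitions guarantees this: counterfactuality only gives $\neg k_0\in\mathcal{B}(\mathcal{E})$ for some $k_0\in\mathcal{K}$, and such a $k_0$ may be an isolated fact with no independent derivation from the survivors, in which case your contradiction never materializes. You partly acknowledge this by inventing your own definition of ``non-local,'' but that definition is not the paper's and essentially assumes the conclusion. The paper's argument is both shorter and correct under its stated definition: the non-locality witnesses $p,q$ both survive deletion under the empty anchor (since $\neg p,\neg q\notin\mathcal{B}(\mathcal{E})$), so $p\land q\in\mathcal{B}(\mathcal{K}\backslash\mathcal{K}^D)$, while $\neg(p\land q)=(\neg p)\lor(\neg q)\in\mathcal{B}(\mathcal{E})=\mathcal{B}(\varnothing\cup\mathcal{E})$, directly violating Equation~(\ref{eq:req_da}). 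No re-derivability of deleted knowledge is needed.
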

\begin{restatable}[Max-Anchor Fallacy]{theorem}{thmmax}\label{Thereom:max-anchor} For a counterfactual and non-local edit $\mathcal{E}$, 
the max-anchor $\{p\in\mathcal{K}| \neg p \notin \mathcal{B}(\mathcal{E})\}$
does not satisfy Equation~(\ref{eq:req_da}).
\end{restatable}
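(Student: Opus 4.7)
My plan is to show that for a counterfactual, non-local edit, the max-anchor necessarily pulls in two pre-edit facts that together with $\mathcal{E}$ derive a contradiction, after which Equation~(\ref{eq:req_da}) must fail. I will first fix the hypotheses: \emph{counterfactual} means $\mathcal{E}$ conflicts with at least one deduction from $\mathcal{K}$, and \emph{non-local} means this conflict is mediated through at least one bridging fact rather than emerging from $\mathcal{B}(\mathcal{E})$ alone. Under these two assumptions I would exhibit a pair $R, B \in \mathcal{K}$ such that $\neg B \in \mathcal{B}(\{R\}\cup\mathcal{E})$ while $\neg B \notin \mathcal{B}(\mathcal{E})$ and $\neg R \notin \mathcal{B}(\mathcal{E})$. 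The guiding intuition would be the Eiffel-Tower example in Figure~\ref{fig:dilemma}: $\mathcal{E}$ relocates the landmark, $R$ is a city-to-country fact, and $B$ is the pre-edit landmark-to-country fact.

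Second, I would invoke the defining predicate of the max-anchor to conclude $R, B \in \mathcal{K}^\mathcal{E}$, since neither has its negation in $\mathcal{B}(\mathcal{E})$. Therefore $\mathcal{B}(\mathcal{K}^\mathcal{E}\cup\mathcal{E})$ contains $B$ directly (it is an element of $\mathcal{K}^\mathcal{E}$) as well as $\neg B$ (obtained via the chain $R, \mathcal{E} \vdash \neg B$). This deductive closure is inconsistent, and by the explosion principle of formal logic it equals the universe $\mathcal{U}$.

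Third, I would finish by producing a witness $k \in \mathcal{B}(\mathcal{K}\backslash\mathcal{K}^D)$ with $\neg k \in \mathcal{B}(\mathcal{K}^\mathcal{E}\cup\mathcal{E})$. Since $\mathcal{K}$ is a rich pre-edit knowledge base, $\mathcal{B}(\mathcal{K}\backslash\mathcal{K}^D)$ is non-empty---any pre-edit fact whose topic is entirely disjoint from $\mathcal{E}$ survives the deletion step. Because the right-hand closure now equals $\mathcal{U}$, $\neg k$ automatically lies in it for every such $k$, so Equation~(\ref{eq:req_da}) fails. This also dovetails with Theorem~\ref{thm:kexp}, which guarantees that such a failure collapses the edited knowledge set.

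The main obstacle will be the first step: rigorously extracting $R$ and $B$ from the informal notions of ``counterfactual'' and ``non-local.'' The cleanest route is probably to read the theorem existentially (``there exists a counterfactual non-local $\mathcal{E}$ for which the max-anchor fails''), paralleling the phrasing of Theorem~\ref{Thereom:no-anchor}, so that exhibiting a single concrete $(\mathcal{E}, R, B)$ triple suffices. The remaining steps are then an application of ex falso quodlibet together with the observation, already encoded in Definition~\ref{eq:def_kelm} via $\mathcal{K}^D$, that the max-anchor does not filter against the \emph{joint} closure $\mathcal{B}(\mathcal{K}^\mathcal{E}\cup\mathcal{E})$ but only against $\mathcal{B}(\mathcal{E})$---which is precisely the structural gap being exploited.
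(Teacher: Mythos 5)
Your core mechanism is the right one and matches the paper's: the non-locality of $\mathcal{E}$ hands you two pre-edit facts (your $R$ and $B$, the paper's $p$ and $q$) whose negations are individually outside $\mathcal{B}(\mathcal{E})$ but whose conjunction is refuted by $\mathcal{E}$, so both slip into the max-anchor and $\mathcal{B}(\mathcal{K}^\mathcal{E}\cup\mathcal{E})$ becomes inconsistent. Your worry about "rigorously extracting $R$ and $B$" is unfounded: the appendix defines a non-local edit precisely as the existence of $p,q\in\mathcal{K}$ with $\neg p,\neg q\notin\mathcal{B}(\mathcal{E})$ but $(\neg p)\lor(\neg q)\in\mathcal{B}(\mathcal{E})$, which is logically equivalent to your condition $\neg B\in\mathcal{B}(\{R\}\cup\mathcal{E})$, $\neg R,\neg B\notin\mathcal{B}(\mathcal{E})$. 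So the theorem is universal over all counterfactual non-local edits and no existential weakening or concrete example is needed.

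The genuine gap is in your third step. Once you have concluded $\mathcal{B}(\mathcal{K}^\mathcal{E}\cup\mathcal{E})=\mathcal{U}$, the deletion set is $\mathcal{K}^D=\{p\in\mathcal{K}\mid\neg p\in\mathcal{U}\}=\mathcal{K}$, so $\mathcal{K}\backslash\mathcal{K}^D=\varnothing$: \emph{no} pre-edit fact survives, topic-disjoint or not, and your proposed witness does not exist. The explosion you invoke in step two is exactly what destroys the left-hand side of Equation~(\ref{eq:req_da}) in step three. The argument is repairable --- $\mathcal{B}(\varnothing)$ still contains the tautologies, and for any tautology $k$ we have $\neg k\in\mathcal{U}=\mathcal{B}(\mathcal{K}^\mathcal{E}\cup\mathcal{E})$, so Equation~(\ref{eq:req_da}) still fails --- but that is a different witness than the one you describe. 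The paper sidesteps the explosion entirely: arguing (in effect, toward a contradiction with Equation~(\ref{eq:req_da}) via Theorem~\ref{thm:kexp}) that $\mathcal{B}(\mathcal{K}^\mathcal{E}\cup\mathcal{E})$ would have to be consistent, it gets $\mathcal{K}^\mathcal{E}\subset\mathcal{K}\backslash\mathcal{K}^D$ and then uses $k=p\land q\in\mathcal{B}(\mathcal{K}\backslash\mathcal{K}^D)$ with $\neg(p\land q)\in\mathcal{B}(\mathcal{E})$ as the direct violating witness. Either fix works; as written, your step three does not.
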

Here a \emph{counterfactual} and \emph{non-local} edit is one that contradicts with some but not all of the pre-edit knowledge. The rigorous definitions are presented with proofs of the above theorems in Appendix~\ref{app:proof}.

Moreover, the knowledge explosion leads to the shrinkage of the knowledge set of language models following Equation~(\ref{eq:def_lm_knowledge}). The reason is that for two conflicting knowledge elements $p, q$ where $x_p=x_q, y_p\ne y_q$, a language model cannot assign $P(y_p|x_p) \ge 1 - \varepsilon$ and $P(y_q|x_q) \ge 1 - \varepsilon$ at the same time. Consequently, we hypothesize that both probabilities will go under the threshold of $1 - \varepsilon$, causing uncertainty within models. We further verify the hypothesis empirically with a set of paired edits and relevant knowledge queries as follows:
\begin{center}
    
    \textbf{Edit} $e$: \emph{A} is located near to \emph{B}. 
    
    \textbf{Query} $q$: \emph{A} is located in the country of \_
\end{center}

\noindent where \emph{A} and \emph{B} are two cities in different countries. For each $q$, we compare the pre-edit certainty $\max_y P(y|q, \Theta)$ with the edited certainty $\max_y P(y|q, e, \Theta)$ for various $e$ with different choices of \emph{B} in Figure~\ref{fig:uncertainty}, which demonstrates the predicted decrease in certainty. Moreover, the magnitude of the decrease appears to be larger for models with stronger reasoning abilities.




\section{Event-Based Knowledge Editing}
\begin{figure}[t]
    \centering
    \includegraphics[width=0.5\textwidth]{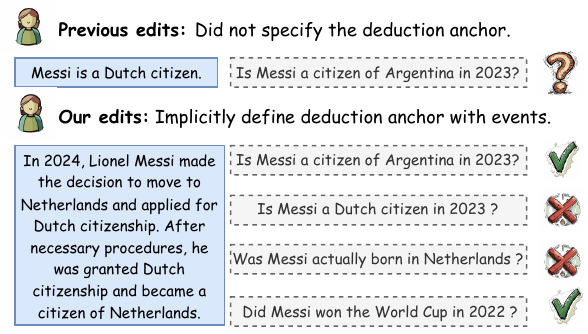}
    \vspace{-15pt}
    \caption{Event descriptions helps to define the deduction anchor of Editing implicitly.}
    \label{fig:event}
\end{figure}

Following our prior analysis, edits without sufficient context to infer a proper deduction anchor $\mathcal{K}^\mathcal{E}$ cause conflicts between the editing boundary $\mathcal{B}\left(\mathcal{K}^\mathcal{E} \cup \mathcal{E}\right)$ and the remaining model knowledge $\mathcal{K}\backslash\mathcal{K}^D$, which ultimately lead to uncertainty in edited models. Rather than simply augmenting the existing benchmarks with deduction anchors for edits, we propose a more practical setting of augmenting edits with eventual context since knowledge updates are more often driven by events in real-world scenarios~\cite{chen2021event,chen2021joint} rather than provided deduction anchors. As shown in Figure~\ref{fig:event} each edit is an event description under event-based editing, where event descriptions helps define the deduction anchor. This setting improves both logically robustness and real-world applicability. 

\subsection{The \benchmark{} Benchmark}
We compile our event-based knowledge editing benchmark \benchmark{} from the \textsc{CounterFactual} dataset~\cite{meng2022locating}, where each instance is a single fact to update. The procedure as described below can also be applied to other knowledge-editing datasets. 
Data statistics and examples of are detailed in Appendix~\ref{appendix:dataset} and the prompts for data creation are in Appendix~\ref{appendix:prompts}. 

\paragraph{Data Collection}
We begin with using GPT-3.5-turbo~(referred as GPT later) to filter out edits that are impossible to take place as future events. 
We then prompt GPT with in-context examples to generate an event description for each remaining edit. This step is essentially using GPT to define an implicit deduction anchor, as exemplified in Figure~\ref{fig:event}. 


\paragraph{Evaluation Task}
To systematically evaluate the abilities of edited models, we include both the question-answering task and the text-completion task. For each edit, we generate five related question-answer~(QA) pairs using GPT. We also require one question to be undecidable given the event description to better delineate the editing boundary by considering GPT as $\Phi$ in Definition~\ref{def:ict_dc}, for which we provide the answer as ``I don't know''~\cite{zhang2023rtuning}. We split the evaluation set into the ``Known'' set and the ``Unknown'' set accordingly. These QAs are subsequently transformed into text completion tasks.

\begin{figure}[t]
    \centering
    \includegraphics[width=0.35\textwidth]{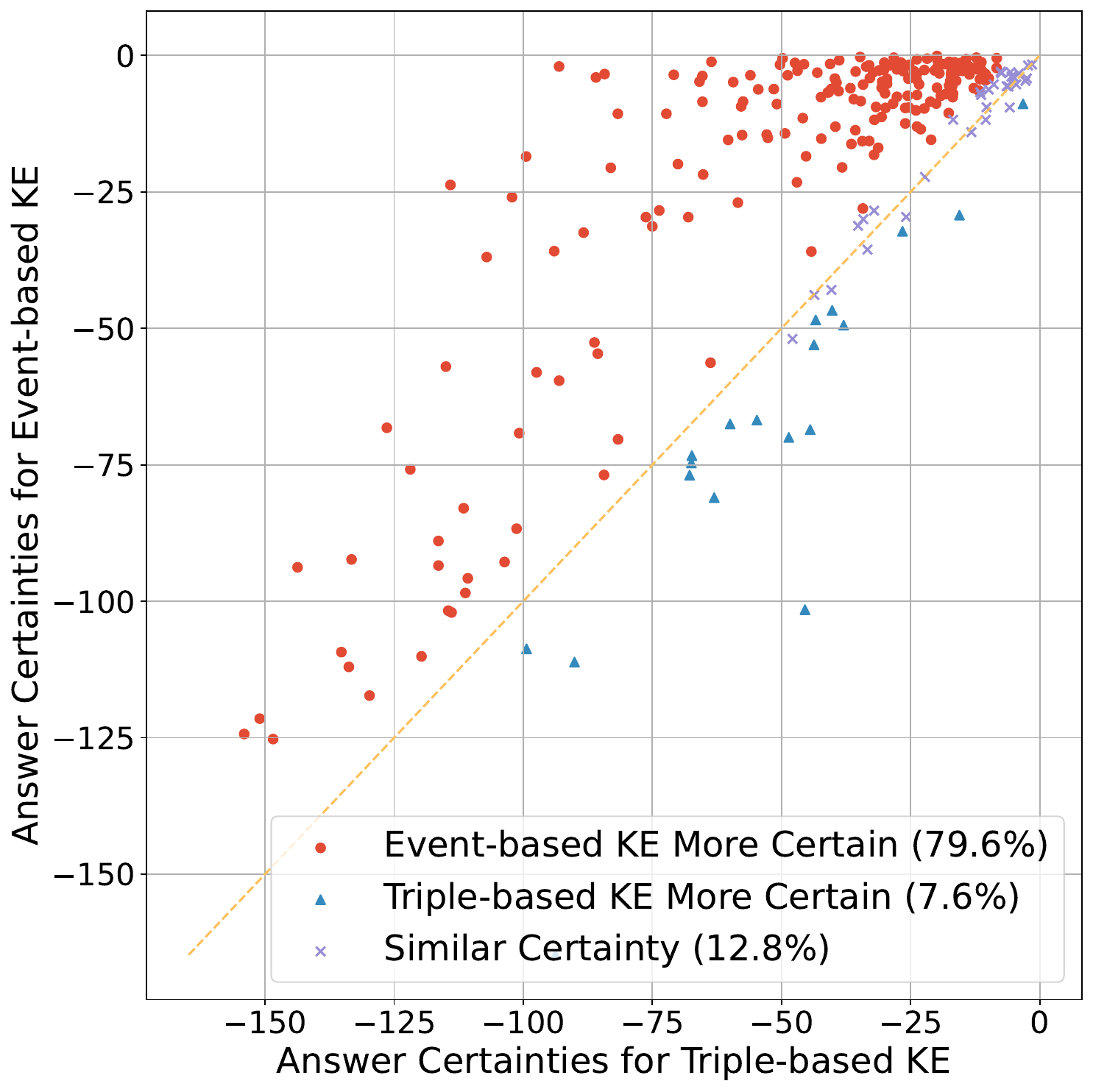}
    \vspace{-3pt}
    \caption{Event-based setting decreases uncertainty.}
    \label{fig:3}
\end{figure}
\subsection{Event-based Edits Mitigates Uncertainty}
\label{experiments:setting}
We verify that event-based editing reduces uncertainty compared with single factual edits.
using event descriptions as a replacement for the fact achieves our goal of mitigating uncertainty. 
We quantify uncertainty based on Equation~(\ref{eq:def_lm_knowledge}). However, since it is computational costly to compute $\max_y P(y|x, \Theta)$ for longer output sequences (answers or text completions), we instead use $\mathbb{E}_{y\sim P(y|x, \Theta)}\log P(y|x, \Theta)$ to measure the certainty\footnote{We sample 5 answers and average the log-likelihood.}.
Each edit instance in \emph{E$^2$dit} contains the original fact, the event description, and the question-answer pairs related to the fact.
We compare the certainty of a frozen pretrained LM generating answers to questions when given the original fact versus the event description.
We plot our results on LLaMA2-7B-Chat in Figure~\ref{fig:3} and leave results on Mistral-7B, and LLaMA2-13B-Chat in Appendix~\ref{appendix:more results}. Each edit instance corresponds to a point in the scatter plot.
We use red to highlight instances where event-based context enhances generation certainty, and blue to indicate the opposite case. Results show that event-based knowledge editing significantly reduces uncertainty.

\begin{figure*}[t]
    \centering
    \includegraphics[width=1.0\textwidth]{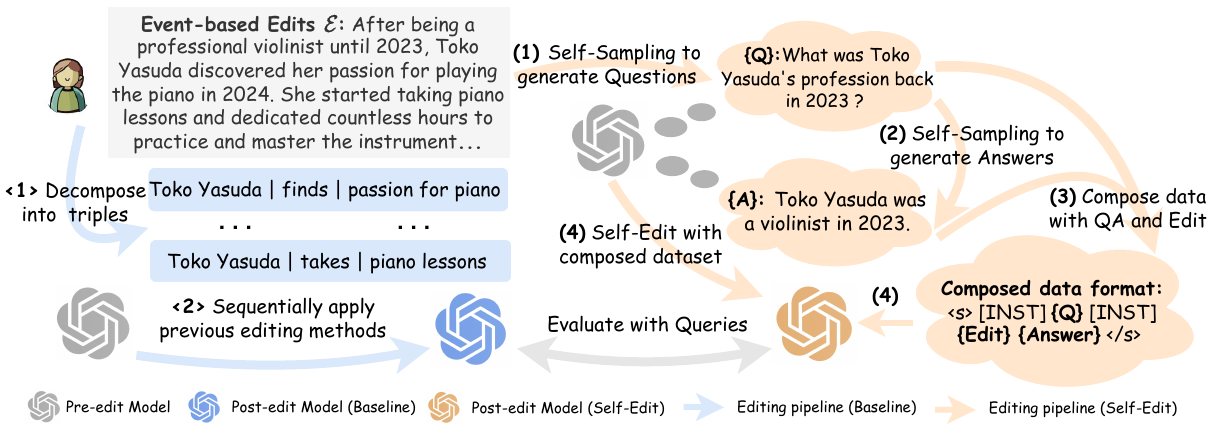}
    \vspace{-18pt}
    \caption{
     Different approaches to event-based knowledge editing. Left: To apply factual-association-based editing methods, we decompose event-based description into triples. Right: Our proposed Self-Edit: We first use the pre-edit LM to generate relevant QA pairs to edits. Then we fine-tune models on instances of (Q $\rightarrow$ Edit, A).  }
    \label{fig:method}
    \vspace{-15pt}
\end{figure*}

\subsection{Approach: Self-Edit Framework}
\label{sec4:appraoch}
Inspired by~\cite{yu2023self}, we design a Self-edit approach for event-based editing.
Given the event-based edits, we use the pre-edit language model to create an augmented dataset to fine-tune the model. As on the right side of Figure~\ref{fig:method}, for each edit $\mathcal{E}$,

\begin{enumerate}
    \item Conduct self-prompting of the language model to generate a related question $Q$ 
    \item Generate the answer $A$ 
(\textit{2024}) 
by prompting the LM with the question $Q$ and the edit $\mathcal{E}$. For questions not answerable, we ask the model to respond with ``\textit{I don't know}''.
    \item Create a training instance of the format $(Q \rightarrow \mathcal{E}, A)$. The model is fine-tuned to recite the edit before answering the question.
\end{enumerate}
More examples are shown in Appendix~\ref{appendix:method_data_example}. Compared with \cite{yu2023self}, we also require the model to generate ``I don't know'' for better identification of its own knowledge boundary. We fine-tune the model on the resulting dataset. Generated edits before answers are removed for evaluation. 

\section{Experiments}
\label{Experiments}
\subsection{Experimental Settings}
We use the LLaMA2-7B-chat model as the basis for editing and evaluate both the question-answering and the text completion setting of \textsc{EvEdit}, with the number of edits ($N$) varied to match the limitations of different baselines as specified in Section~\ref{sec:baseline}. Performance was assessed separately on ``Related" and ``Unknown" data subsets. We provide further details in Appendix \ref{appendix:experiments}.

We adopt the factual consistency metric and naturalness metric from UniEval~\cite{zhong2022unified} as our evaluation metrics. The consistency metric is computed with respect to the edit, so it measures how well the edit is executed. 
The naturalness metric shows how well the model's generation ability is preserved after the edit. 
 
\subsection{Baselines Methods in Comparison} \label{sec:baseline}

We consider three categories of baselines:

\paragraph{Factual-Association} We adapt existing factual-association editing methods to event-based editing by decomposing the event into several fact triples with GPT-3.5-turbo, as depicted on the left side of Figure \ref{fig:method}. 
 We consider \textbf{ROME}~\cite{meng2022locating}, \textbf{MEMIT}~\cite{meng2022mass}, \textbf{PMET}~\cite{li2023pmet} and \textbf{GRACE}~\cite{hartvigsen2023aging} in this category.
 These methods, however, do not scale well in terms of 
efficiency and 
effectiveness, thus we limit our evaluation to $N=1,10$.
\paragraph{Fine-tuning} For this category, we fine-tune models on $N = 100$ edits and assess their performance on $N = 1, 10, 100$ in the evaluation sets. 
We consider the \textbf{Direct Fine-tuning} (on event descriptions) and our proposed \textbf{Self-edit} in this category. 
Compared to factual-association methods, fine-tuning methods support the editing of a large number of facts simultaneously.
\paragraph{In-context Learning} Additionally, we assess an in-context performance~(ICL), which involves prepending event descriptions to evaluation prompts without changing model parameters. This serves as an upper bound based on the model's deductive capabilities\footnote{This is not a theoretical upper bound of all models' or human's logical deductions abilities, but instead an empirical upper bound only for the pre-edit model.} since it is equivalent to setting $\Phi=\Theta$ in Definition~\ref{eq:def_kelm}.
However, this method's scalability is limited by the model's context window size, thus we only evaluate it for $N=1,10,59$, where $59$ is the maximum event descriptions we can accommodate into LLaMA2.
\subsection{Main Results}
We present results for both text completion and QA tasks, across various numbers of edits $N$ and data splits (Known and Unknown) in Table \ref{table:main_results}. More qualitative results can be found in Appendix~\ref{appendix:edit_data_example}.

\paragraph{Factual Association Fails \textsc{EvEdit}}
\label{experiments:baseline}
\begin{table*}[!ht]
\centering
\vspace{12pt}
\small 
\resizebox{1.0\textwidth}{!}
{
\begin{tabular}{cc|cccc|cccc}
\toprule
\multirow{3}{*}{\textbf{KE methods}} & \multirow{3}{*}{\textbf{Evaluation Metric}} & \multicolumn{4}{c|}{\textbf{Text Completion}} & \multicolumn{4}{c}{\textbf{Question Answering}} \\ 
& & \multicolumn{2}{c}{\textbf{N=1}} & \multicolumn{2}{c|}{\textbf{N=10}} & \multicolumn{2}{c}{\textbf{N=1}} & \multicolumn{2}{c}{\textbf{N=10}} \\
& & \textbf{Related} & \textbf{Unknown} &  \textbf{Related} & \textbf{Unknown} & \textbf{Related} & \textbf{Unknown} &  \textbf{Related} &  \textbf{Unknown} \\
\midrule
\multirow{2}{*}{Base Model}& 
\textit{Consistency}  & 0.324 & 0.347 & 0.318 & 0.355 & 0.347 & 0.372 & 0.349 & 0.378 \\
& \textit{Naturalness}  & 0.894 & 0.869 & 0.898 & 0.875 & 0.833 & 0.821 & 0.845 & 0.866 \\
\midrule
\multirow{2}{*}{ROME}& 
\textit{Consistency}  & 0.331 & 0.262 & 0.310 & 0.258 & 0.344 & 0.270 & 0.336 & 0.243 \\
& \textit{Naturalness}  & 0.671 & 0.479 & 0.610 & 0.454 & 0.655 & 0.440 & 0.574 & 0.451 \\
\midrule
\multirow{2}{*}{MEMIT}
& \textit{Consistency}  & 0.334 & 0.277 & 0.329 & 0.271 & 0.342 & 0.281 & 0.340 & 0.279 \\
& \textit{Naturalness}  & 0.629 & 0.466 & 0.588 & 0.430 & 0.630 & 0.464 & 0.546 & 0.421 \\
\midrule
\multirow{2}{*}{PMET}& 
\textit{Consistency}  & 0.346 & 0.319 & 0.332 & 0.317 & 0.350 & 0.316 & 0.354 & 0.322\\
& \textit{Naturalness}  & 0.840 & 0.812 & 0.880 & 0.862 & 0.814 & 0.790 & 0.822 & 0.793 \\
\midrule
\multirow{2}{*}{GRACE$_{\epsilon=25}$}& 
\textit{Consistency}  & 0.436 & 0.320 & 0.442  & 0.304 & 0.441  & 0.317 & 0.443 & 0.340 \\
& \textit{Naturalness}  & 0.702 & 0.672 & 0.691 & 0.643 & 0.690 & 0.668 & 0.673 & 0.659 \\
\midrule
\multirow{2}{*}{GRACE$_{\epsilon=50}$}& 
\textit{Consistency}  & 0.337 & 0.298 & 0.335 & 0.256  & 0.345 & 0.308 & 0.344 & 0.313 \\
& \textit{Naturalness}  & 0.806 & 0.791 & 0.760 & 0.770 & 0.797 & 0.764 & 0.758 & 0.723 \\
\midrule
\multirow{2}{*}{ICL}& 
\textit{Consistency}  & \textbf{0.726} & \underline{0.351} & \textbf{0.626} & \underline{0.331}  & \textbf{0.739} & \textbf{0.405} & \textbf{0.662} & \underline{0.350} \\
& \textit{Naturalness}  & 0.903 & 0.887 & 0.913 & 0.896 & 0.898 & 0.846 & 0.910 & 0.902 \\
\midrule
\multirow{2}{*}{\textbf{Ours}}& 
\textit{Consistency}  & \underline{0.512} & \textbf{0.401} & \underline{0.507} & \textbf{0.402} & \underline{0.523} & \underline{0.403} & \underline{0.519} & \textbf{0.388} \\
& \textit{Naturalness}  & 0.804 & 0.867 & 0.816 & 0.877 & 0.816 & 0.872& 0.817 & 0.864 \\
\midrule
\midrule
\multirow{3}{*}{\textbf{KE methods}} & \multirow{3}{*}{\textbf{Evaluation Metric}} & \multicolumn{4}{c|}{\textbf{Text Completion}} &\multicolumn{4}{c}{\textbf{Question Answering}} \\ 
& & \multicolumn{2}{c}{\textbf{N=59}} & \multicolumn{2}{c|}{\textbf{N=100}} & \multicolumn{2}{c}{\textbf{N=59}} & \multicolumn{2}{c}{\textbf{N=100}}\\
& & \multicolumn{1}{c}{\textbf{Related}} & \multicolumn{1}{c}{\textbf{Unknown}}& \multicolumn{1}{c}{\textbf{Related}} & \multicolumn{1}{c|}{\textbf{Unknown}  } &  \multicolumn{1}{c}{\textbf{Related}} & \multicolumn{1}{c}{\textbf{Unknown}} &  \multicolumn{1}{c}{\textbf{Related}} & \multicolumn{1}{c}{\textbf{Unknown}} \\
\midrule
\multirow{2}{*}{Base Model}& 
\textit{Consistency}  
& 0.304 & \underline{0.383} & 
0.320 & \underline{0.358} & 0.345 &
\underline{0.386} &
\underline{0.356} & \underline{0.381} \\
& \textit{Naturalness}  & 
0.906 &
0.872 & 0.897 & 0.883 & 
0.843 &
0.812 &
0.843 & 0.814 \\
\midrule
\multirow{2}{*}{Finetuning}& 
\textit{Consistency}  & 
0.351 & 0.325 & \underline{0.340} & 0.292 & 
0.347 & 0.289 &
0.322 & 0.297 \\
&\textit{Naturalness}  & 
0.883 &
0.918 & 0.876 & 0.901 & 
0.906 &
0.893 &
0.904 & 0.898 \\
\midrule
\multirow{2}{*}{ICL}& 
\textit{Consistency}  & 
\underline{0.426} &
0.308 &
- & -  &
\underline{0.495} &
0.329 &
- & - \\
&\textit{Naturalness}  & 0.690 & 0.781 & - & - & 0.901 & 0.813 &
- & - \\
\midrule
\multirow{2}{*}{\textbf{Ours}}& 
\textit{Consistency}  & 
\textbf{0.502} &
\textbf{0.413} &
\textbf{0.501} & \textbf{0.396} &
\textbf{0.523} &
\textbf{0.391} &
\textbf{0.517} & \textbf{0.385} \\
&\textit{Naturalness}  & 
0.801 &
0.885 & 0.812 & 0.875 & 0.799 &
0.897 & 0.828 & 0.896 \\
\bottomrule
\end{tabular}
}
\caption{Factual consistency and Naturalness of edited models. N is the number of edits at a time. We bold the best results and underline the second best for each metric}\label{table:main_results}
\end{table*}

Factual-association methods display limited improvements in factual consistency while significantly harming the naturalness of generations. A typical case is that tokens from the event description are generated repeatedly, as shown in Appendix~\ref{appendix:edit_data_example}.
Among this family of methods, GRACE~\cite{hartvigsen2023aging}, which employs a code book as an external repository for potential hidden states, performs best in consistency. However, GRACE is sensitive to the choice of hyperparameters, as shown by the difference in performance for different $\epsilon$ values.

\paragraph{Self-edit Excels at \textsc{EvEdit}}
\label{experiments:self-updating}
In general, fine-tuning approaches support a large number of edits with little loss in naturalness.
Compared to direct fine-tuning, our Self-Edit framework yields a substantial improvement on consistency, showing that the edit is effective. Moreover, our method displays clearer editing boundaries by improved scores on the Unknown subset. Since our method is fine-tuned with explicit editing boundaries by giving ``I don't know'' for undecidable questions, we can directly compute precision, accuracy, and F1-score for the ``Unknown'' subset in Table~\ref{table:unknown-f1}. Results demonstrate that although our approach showcases improved performance over baselines on this subset, there is still a significant gap toward a satisfying characterization of editing boundaries in edited models.

\begin{table}[!ht]
\centering
\resizebox{0.5\textwidth}{!}
{
\begin{tabular}{ccc|ccc}
\toprule
\multicolumn{3}{c|}{\textbf{Text Completion}} & \multicolumn{3}{c}{\textbf{Question Answering}} \\
\textbf{Recall} & \textbf{Precision} & \textbf{F1-Score} & \textbf{Recall} & \textbf{Precision} & \textbf{F1-Score}\\
0.260 & 0.279 & 0.269 & 0.320 & 0.296 & 0.308\\ 
\bottomrule
\end{tabular}
}
\caption{Precision, recall, and F1 of unknown questions}\label{table:unknown-f1}
\end{table}

\paragraph{In-context Learning Fails on Larger $N$} In-context learning shows superior performance for smaller $N$s, but its performance drops significantly as $N$ increases and is inferior to our method for $N=59$. Moreover, this method cannot scale to larger $N$ due to limited sequence length.


\section{Related Work}

\subsection{Knowledge Editing}

\paragraph{Approaches}
Edit the intrinsic knowledge directly changes the model's parameters. 
Major approaches to edit the model's intrinsic knowledge include (1) Fine-tuning-based methods like directly fine-tuning with language modeling loss, LoRA~\cite{hu2021lora} and Melo~\cite{yu2023melo} (2) Meta-learning-based approaches like KE~\cite{decao2021editing}, MEND~\cite{mitchell2021fast}, and MALMEN~\cite{tan2023massive} (3) Locate-and-edit method like ROME~\cite{meng2022locating}, MEMIT~\cite{meng2022mass}, and Pmet~\cite{li2023pmet}. (4) Merging external knowledge representations like~\cite{dong2022calibrating,murty2022fixing,huang2023transformerpatcher,hernandez2023inspecting,hartvigsen2023aging}. However, these approaches work on the existing flawed setting and are limited in overall performance, scalability, and adaptability to real-world.

\paragraph{Benchmarks}
The most widely used dataset for knowledge editing is probably \textsc{CounterFact}~\cite{meng2022locating}. Other commonly used knowledge editing datasets include ZsRE~\cite{levy2017zeroshot,yao2023editing}, WikiBio~\cite{hartvigsen2023aging}, WikiData~\cite{cohen2023evaluating}, and ConvSent~\cite{mitchell2022memory}. More datasets that are used for knowledge editing can be found in~\cite{wang2023knowledge}. The latest work has aggregated previous datasets and formulated a new benchmark KnowEdit~\cite{zhang2024comprehensive}. Despite the large number of datasets proposed, none of them provides a clear \emph{editing boundary} for knowledge editing. This leads to knowledge uncertainty during the editing process, which eventually hinders the edited model's performance.

\subsection{Retrieval Augmentation and Tool Learning}
Language models can resort to external knowledge to enhance their performance. One typical approach is retrieve-augmented generation~\cite{gao2024retrievalaugmented}. The retrieval and integration process can be done in the pretraining stage~\cite{guu2020realm,borgeaud2022improving,wang2023shall}, fine-tuning stage~\cite{asai2023selfrag,kang2023knowledge,zhang-etal-2023-vibe}, and inference stage~\cite{khandelwal2019generalization,sun2022recitation} of the model. Going Further, LLM can now connect to various functional ends~\cite{yang2024llm}, using tools~\cite{schick2023toolformer}, engaging with different modalities~\cite{suris2023vipergpt,Liu2023ALA}, involve in multi-turn interactions~\cite{Wang2023MINTEL} and serving as powerful agents~\cite{wang2023survey}. However, these approaches generally do not alter model parameters and cannot intrinsically improve the language model. Therefore, they are not within the scope of this paper.

\section{Conclusion}

This paper establishes a theoretical framework for knowledge editing, identifying a pivotal challenge within existing methodologies: the oversight of the \textit{deduction anchor}. This oversight ultimately leads to ambiguous editing boundaries and uncertainty within edited language models. To overcome this limitation, we introduced the concept of event-based knowledge editing. This approach enhances the traditional editing framework by incorporating event descriptions, which not only naturally mirror real-world editing scenarios but also implicitly define the deduction anchor, thereby addressing the issue of indeterminate editing boundaries. To tackle the complexities of event-based knowledge editing, we introduce another innovative \textit{Self-Edit} method. 
With our new benchmark \benchmark{}, we did thorough experiments to demonstrate that this new setting is challenging for existing approaches while our novel approach enjoys a better performance. Conclusively, we advocate for intensified research endeavors towards this more pragmatic, event-based knowledge editing setting.

\section{Limitation}
\label{sec:limitation}
We reflect on the limitations of our paper below: 
\begin{enumerate}

\item While this research introduces innovative strategies for addressing uncertain editing boundaries, alternative approaches exist that merit consideration. One such method involves manually curating a set of knowledge to serve as deduction anchors. This approach, though potentially effective, was not explored in our current framework.

\item The precision of event descriptions plays a crucial role in mitigating uncertainties. However, in instances where these descriptions lack sufficient detail, ambiguities may still arise, especially when addressing complex or intricately designed questions. This limitation underscores the need for highly detailed event narratives to enhance the clarity and decisiveness of knowledge edits.

\item Our evaluation was constrained by computational resources, limiting the scale of our experiments to a maximum of 100 edits simultaneously. Although we are confident in the capability of our methodologies to address event-based knowledge editing effectively, more experiments should be done on a larger scale

\item The scope of our study is confined to text-based knowledge editing, notwithstanding the inherently broader domain of knowledge editing that spans multiple modalities. This limitation highlights an area for future research, suggesting that extending our framework to accommodate multi-modal knowledge editing could unveil additional insights and improvements.

\end{enumerate}

\section{Ethical Considerations}

This research is committed to enhancing the trustworthiness and reliability of language models, a cornerstone for their ethical application across various sectors of society. We identify the problem of knowledge explosion in the existing setting, where model tends to lose certainty over past knowledge after editing. This potentially increases the risk of hallucination and producing malicious content.Through the innovative introduction of an event-based knowledge editing setting, alongside our novel \textit{Self-Edit} approach, we aim to significantly reduce the occurrence of uncertainties and hallucinations in edited language models. These advancements are crucial for ensuring that automated language generation systems produce content that is not only accurate and reliable but also ethically sound and socially responsible.

\bibliography{anthology,custom}
\bibliographystyle{acl_natbib}

\appendix
\label{appendix}

\section{Mathematical proof for Theorems}\label{app:proof}
We first restate several definitions and equations for the ease of reference.
\paragraph{Restate of Editing Process Equation~(\ref{eq:def_ke})} \begin{equation*}
    \begin{aligned}
        \mathcal{K}^D &= \left\{p\in\mathcal{K}| \neg p\in  \mathcal{B}\left(\mathcal{K}^\mathcal{E}\cup \mathcal{E}\right) \right\}\\
        \mathcal{K}' &= \mathcal{B}\left(\mathcal{K}\backslash\mathcal{K}^D \cup \mathcal{E}\right)
    \end{aligned}.
\end{equation*}
\paragraph{Restate of Equation~(\ref{eq:req_da})}
\begin{equation*}
    \forall k \in \mathcal{B}\left(\mathcal{K}\backslash\mathcal{K}^D\right), \neg k \notin \mathcal{B}\left(\mathcal{K}^\mathcal{E}\cup \mathcal{E}\right) .
\end{equation*}

We will prove the following theorems in the main text. Within the scope of this work, we assume the universe of knowledge is a countable set.
\thmkexp* 
\begin{proof} If $\exists k \in \mathcal{B}\left(\mathcal{K}\backslash\mathcal{K}^D\right)$, such that $\neg k \in \mathcal{B}\left(\mathcal{K}^\mathcal{E}\cup \mathcal{E}\right)$. We have
\begin{equation}
\begin{aligned}
    k \in \mathcal{B}\left(\mathcal{K}\backslash\mathcal{K}^D\right)&\subset \mathcal{B}\left(\mathcal{K}\backslash\mathcal{K}^D \cup \mathcal{E}\right)\\
    \neg k \in \mathcal{B}\left(\mathcal{K}^\mathcal{E}\cup \mathcal{E}\right)&\subset \mathcal{B}\left(\mathcal{K}\backslash\mathcal{K}^D \cup \mathcal{E}\right)
\end{aligned}.
\end{equation}
Due to \emph{ex falso quodlibet} (contradition leads to all), $\mathcal{K}'=\mathcal{B}\left(\mathcal{K}\backslash\mathcal{K}^D \cup \mathcal{E}\right)=\mathcal{U}$.

In addition to the results stated, we also show that Equation~(\ref{eq:req_da}) is sufficient to ensure consistency. Otherwise, suppose for some $\mathcal{K}^\mathcal{E}$ satisfying Equation~(\ref{eq:req_da}), 
\begin{equation}
\exists r\in\mathcal{K}', s.t. \neg r\in\mathcal{K}=\mathcal{B}\left(\mathcal{K}'\backslash\mathcal{K}^D \cup \mathcal{E}\right).
\end{equation}
Since $\mathcal{K}'$ is closed, $r\land \neg r \in \mathcal{K}'$. Moreover, since $\mathcal{K}'\backslash\mathcal{K}^D \subset \mathcal{K}$ and $\mathcal{E}$ are assumed consistent for valid editing, we must have
\begin{equation}
\exists p \in \mathcal{B}\left(\mathcal{K}'\backslash\mathcal{K}^D\right), q\in \mathcal{B}\left(\mathcal{E}\right), s.t.~p \land q \rightarrow r\land \neg r.
\end{equation}
Due to \emph{ex falso quodlibet}, we also have $(r\land \neg r) \rightarrow \neg p$ and hence, $(p \land q \rightarrow \neg p)$. Further,
$$
(p \land q \rightarrow \neg p) \rightarrow (q \rightarrow \neg p),
$$
which implies $\neg p \in \mathcal{B}(\mathcal{E})$, leading to contradiction with Equation~(\ref{eq:req_da}).
\end{proof}
Before we proceed to the next proofs, we formally define three properties of an edit: \emph{counterfactual}, \emph{non-global} and \emph{non-local}.

\begin{definition}[Counterfactual Edit] An edit $\mathcal{E}$ to a closed knowledge set $\mathcal{K}$ is \emph{counterfactual} if
$$
\exists p \in \mathcal{B}(\mathcal{E}), \neg p \in \mathcal{K}.
$$
\end{definition}

\begin{definition}[Non-global Edit] An edit $\mathcal{E}$ to a closed knowledge set $\mathcal{K}$ is \emph{non-global} if
$$
\exists p \in \mathcal{K}, \neg p \notin \mathcal{B}(\mathcal{E}).
$$
A non-global edit ensures that knowledge editing is not redefining the entire knowledge set.
\end{definition}

\begin{definition}[Non-local Edit] An edit $\mathcal{E}$ to a closed knowledge set $\mathcal{K}$ is \emph{non-local} if
$$
\begin{aligned}
\exists p, q \in \mathcal{K}, s.t.~ \neg p \notin \mathcal{B}(\mathcal{E}), \neg q\notin \mathcal{B}(\mathcal{E}),\\
but~ (\neg p) \lor (\neg q) \in \mathcal{B}(\mathcal{E})
\end{aligned}
$$
A non-local edit ensures that it is associated with other knowledge that is not a paraphrase of itself. Although this definition is mathematically complex, it is often observed in real world editing cased as illustrated in Figure~\ref{fig:dilemma} in the main text.
\end{definition}

\thmno*
\begin{proof}
We first prove the existence of an anchor set satisfying Equation~(\ref{eq:req_da}). For any two sets of knowledge $\mathcal{X}$ and $\mathcal{Y}$, we denote $\mathcal{X}\in\mathcal{C}(\mathcal{Y})$, meaning $\mathcal{X}$ and $\mathcal{Y}$ are consistent with each other if
\begin{equation}
\forall p \in \mathcal{B}(\mathcal{X}), \neg p \notin \mathcal{B}(\mathcal{Y}).
\end{equation}
Since $\mathcal{E}$ is non-global, there exists $p\in\mathcal{K}$ such that $\mathcal{E}\in\mathcal{C}(\{p\})$. We denote $\mathcal{T}_0 = \{p\}$, and use the following process to get $\mathcal{T}_{n+1}$ from $\mathcal{T}_{n}$: 
Since we assume the universe of knowledge $\mathcal{U}$ is a countable set, $\mathcal{K}$ is also countable. Denote $\mathcal{K} = \{k_1, k_2, \ldots, k_m, \ldots\}$ where $k_1=p$.
if
\begin{equation}\label{eq:m}
~\exists k_m \in \mathcal{K} \backslash \mathcal{T}_{n},  \{k_m\} \in \mathcal{C}(\mathcal{T}_n\cup\mathcal{E}),
\end{equation}
we choose 
\begin{equation}
\mathcal{T}_{n+1}=  \mathcal{T}_n\cup \{k_{m_n^*}\},
\end{equation}
where $m_n^*$ is the minimal index satisfying Equation~(\ref{eq:m})
. Otherwise if
\begin{equation}\label{eq:nom}
~\forall k_m \in \mathcal{K} \backslash \mathcal{T}_{n},  \{k_m\} \notin \mathcal{C}(\mathcal{T}_n\cup\mathcal{E}),
\end{equation}
we choose $\mathcal{T}_{n+1}=  \mathcal{T}_n$. Since $\mathcal{T}_n \subset \mathcal{T}_{n+1}$, the limitation $\mathcal{T}=\lim_{n\to\infty} \mathcal{T}_n$ exists. Now we prove that $\mathcal{K}^\mathcal{E} = \mathcal{T}$ satisfies Equation~(\ref{eq:req_da}). We consider two cases.

\noindent\textbf{Case A}: $\exists N, s.t.~\forall i, j\ge N, \mathcal{T}_i = \mathcal{T}_j $.
In this case, Equation~(\ref{eq:nom}) holds for $n\ge N$. Therefore, 
\begin{equation}\label{eq:caseA}
~\forall k_m \in \mathcal{K} \backslash \mathcal{T},  \{k_m\} \notin \mathcal{C}(\mathcal{T}\cup\mathcal{E}).
\end{equation}
This leads to
\begin{equation}
    \forall k \in \mathcal{K} \backslash \mathcal{T}, \exists q \in \mathcal{B}(\{k\}), \neg q \in \mathcal{B}(\mathcal{T}\cup\mathcal{E}).
\end{equation}
Since $\mathcal{E}$ is non-local, ${K} \backslash \mathcal{T}\ne\varnothing$ and we have
\begin{equation}
    \exists k\in\mathcal{K} \backslash \mathcal{T}, \exists q \in \mathcal{B}(\{k\}), \neg q \in \mathcal{B}(\mathcal{T}\cup\mathcal{E}).
\end{equation}
Since $k \rightarrow q$, $\neg q \rightarrow \neg k$ and $\neg k \in \mathcal{B}(\mathcal{T}\cup\mathcal{E})$. In short, we have
\begin{equation}
    \exists k\in\mathcal{K} \backslash \mathcal{T}, \neg k \in \mathcal{B}(\mathcal{T}\cup\mathcal{E}).
\end{equation}
Recall the definition of $\mathcal{K}^\mathcal{D}$ in Equation~(\ref{eq:def_ke}), we have $\mathcal{K} \backslash \mathcal{T} \subset \mathcal{K}^D$, or equivalently $\mathcal{K} \backslash \mathcal{K}^D \subset \mathcal{T} $. At the same time, it is obvious that $\mathcal{T} \subset \mathcal{K} \backslash \mathcal{K}^D$ from the definition of $\mathcal{K}^\mathcal{D}$. Therefore, $\mathcal{T}=\mathcal{K} \backslash \mathcal{K}^D$ and Equation~(\ref{eq:req_da}) naturally follows.

\noindent\textbf{Case B}: $\forall i\ne j, \mathcal{T}_i \ne \mathcal{T}_j$. In this case, Equation~(\ref{eq:m}) holds for all $n$. 

We first show that $\{m_n^*\}$ monotonically increase with respect to $n$. Since $\mathcal{T}_n \subsetneq \mathcal{T}_{n+1}$, $\mathcal{C}(\mathcal{T}_{n+1}\cup \mathcal{E}) \subset \mathcal{C}(\mathcal{T}_{n}\cup \mathcal{E})$. Hence, if $m_n^* > m_{n+1}^*$, $\{k_{m_{n+1}^*}\} \in \mathcal{C}(\mathcal{T}_{n+1}) \subset \mathcal{C}(\mathcal{T}_{n})$, which leads to the contradiction with the requirement that $m_n^*$ is the minimal index satisfying Equation~(\ref{eq:m}). This concludes the proof for the monotonicity.

Since $\mathcal{T}_n \subsetneq \mathcal{T}_{n+1}$, $|\mathcal{T}_{n+1}| \ge |\mathcal{T}_n| + 1$ where $|\cdot|$ is the number of elements within a set. Therefore, $\mathcal{T}$ is a set of infinite elements. Hence, $\forall k_m \in \mathcal{K} \backslash \mathcal{T}$, there exists $k_{m_n^*} \in \mathcal{T}$ such that $m < m_n^*$. From the definition of $\mathcal{m}_n^*$, $\{k_m\}\notin \mathcal{C}(\mathcal{T}_{n}\cup \mathcal{E})\supset\mathcal{C}(\mathcal{T}\cup \mathcal{E})$. Therefore, Equation~(\ref{eq:caseA}) also holds, and the rest of proof follows the same arguments as in Case A. This concludes the proof for the existence of $\mathcal{K}^\mathcal{E}$ that satisfies Equation~(\ref{eq:req_da}).

We now prove that $\varnothing$ does not satisfy Equation~(\ref{eq:req_da}). From the definition of $\mathcal{K}^D$ when $\mathcal{K}^\mathcal{E}=\varnothing$ and non-locality, we have
$$
\exists p, q \in \mathcal{K} \backslash \mathcal{K}^D, s.t.~ \neg (p\land q) = (\neg p) \lor (\neg q) \in \mathcal{B}(\mathcal{E}).
$$
Since $p\land q \in \mathcal{B}\left(\mathcal{K} \backslash \mathcal{K}^D\right)$, this leads to the contradiction to Equation~(\ref{eq:req_da}).
\end{proof}

\thmmax*
\begin{proof}
Since $\mathcal{E}$ is non-global, $\mathcal{K}^\mathcal{E}\ne\varnothing$. Moreover, from the proof of Theorem~\ref{thm:kexp} we see that $\mathcal{B}(\mathcal{E}\cup \mathcal{K}^\mathcal{E})$ is consistent. Therefore, 
$$\forall p\in\mathcal{K}^\mathcal{E}, \neg p \notin \mathcal{B}(\mathcal{E}\cup \mathcal{K}^\mathcal{E}), or~\mathcal{K}^\mathcal{E}\subset \mathcal{K}\backslash \mathcal{K}^D.$$
Moreover, from the non-locality of $\mathcal{E}$, we have
$$
\exists p, q \in \mathcal{K}^\mathcal{E} \subset \mathcal{K}\backslash \mathcal{K}^D, \neg (p\land q) \in \mathcal{B}(\mathcal{E}),
$$
which leads to contradiction to Equation~(\ref{eq:req_da}).
\end{proof}

\section{Additional Experimental Results}
\label{appendix:more results}
In this section, we provide more experimental results which helps to validate our claim in \S~\ref{experiments:setting}. As shown in Figure~\ref{fig:3_appendix}, we show the Entropy on three different models to demonstrate that our setting decreases model uncertainty. 
\begin{figure*}[t]
    \centering
    \subfloat[Entropy by Llama-7B-Chat.]{\includegraphics[width=0.3\textwidth]{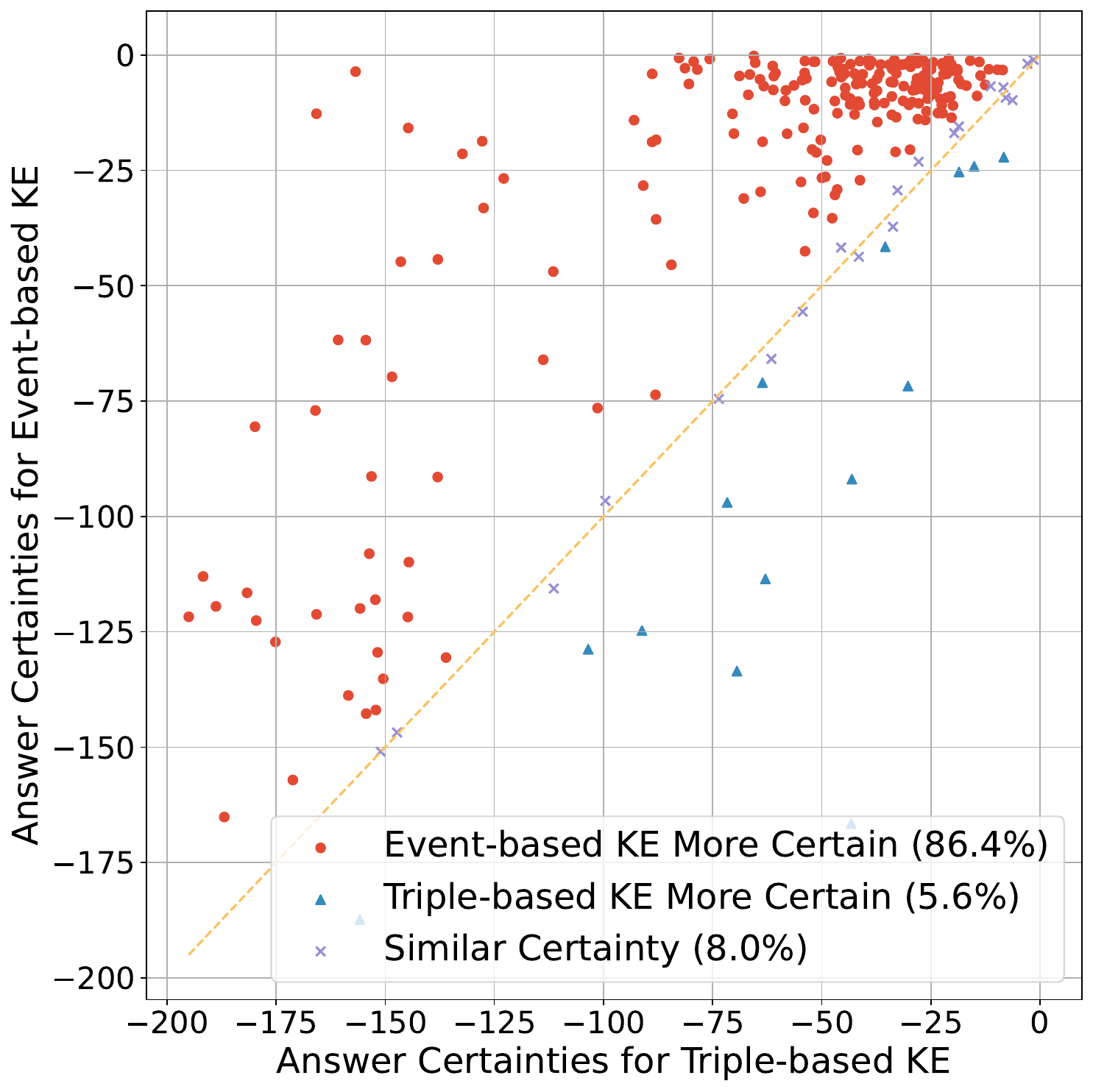} \label{fig:3-1}}
    \hfill 
    \subfloat[Entropy by Mistral-7B-Instruct.]{\includegraphics[width=0.3\textwidth]{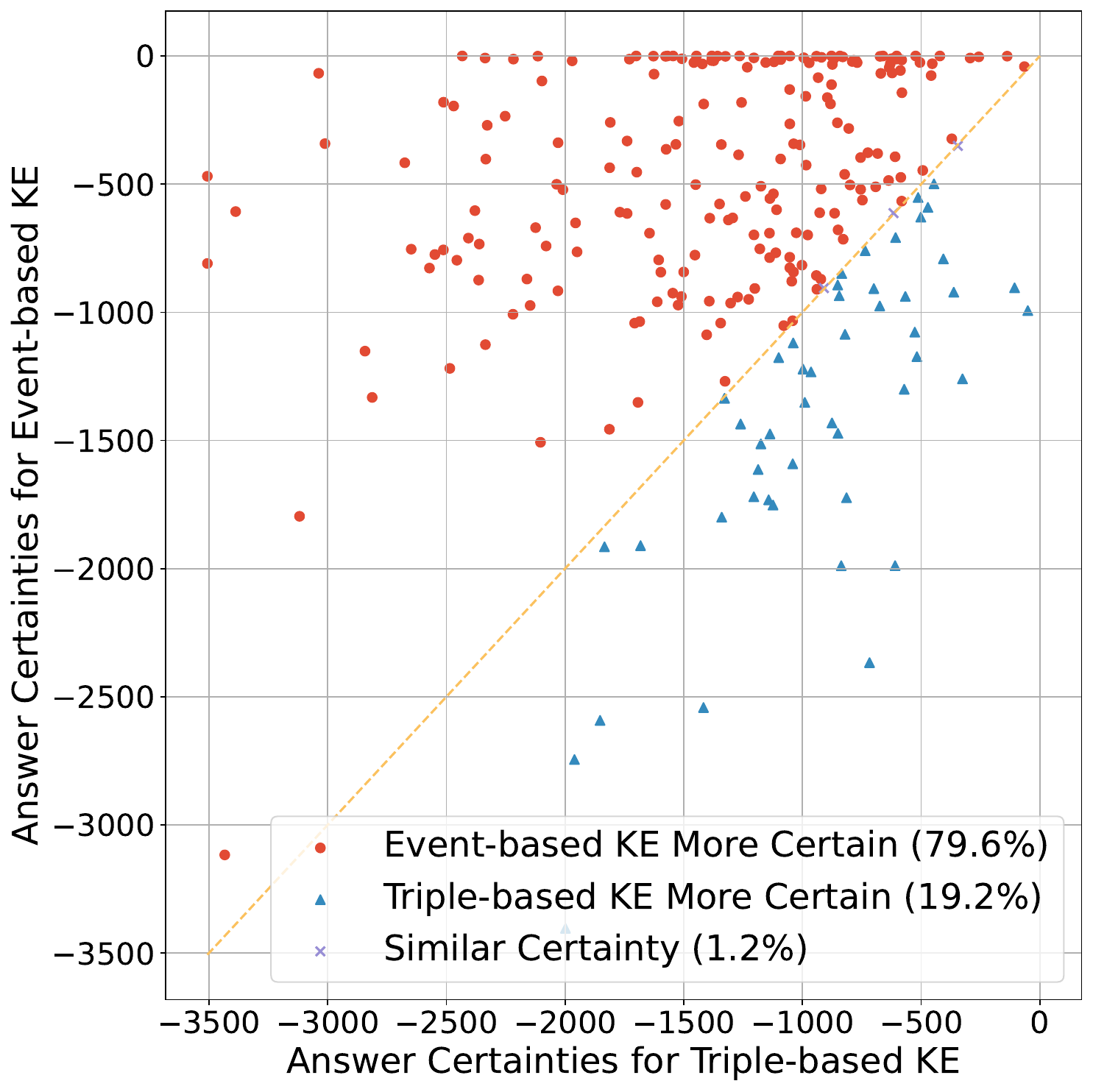} \label{fig:3-2}}
    \hfill 
    \subfloat[Entropy by Llama-13B-chat.]{\includegraphics[width=0.3\textwidth]{Inserts/certainty_llama13B.pdf} \label{fig:3-3}}
    \caption{Demonstration of our proposed setting decreases model uncertainty. The Questions are sampled from event descriptions.}
    \label{fig:3_appendix}
\end{figure*}

We also provide an additional group of experimental results with different question sampling. The questions used in the previous section were generated from events, while the questions used here are generated only from triples, thus containing a more biased sample and benefiting the performance of triples. However, as shown in Figure~\ref{fig:a2}, our event-based edits still enjoy a decreased uncertainty.
\begin{figure*}[t]
    \centering
    \subfloat[Entropy by Llama-7B-Chat.]{\includegraphics[width=0.3\textwidth]{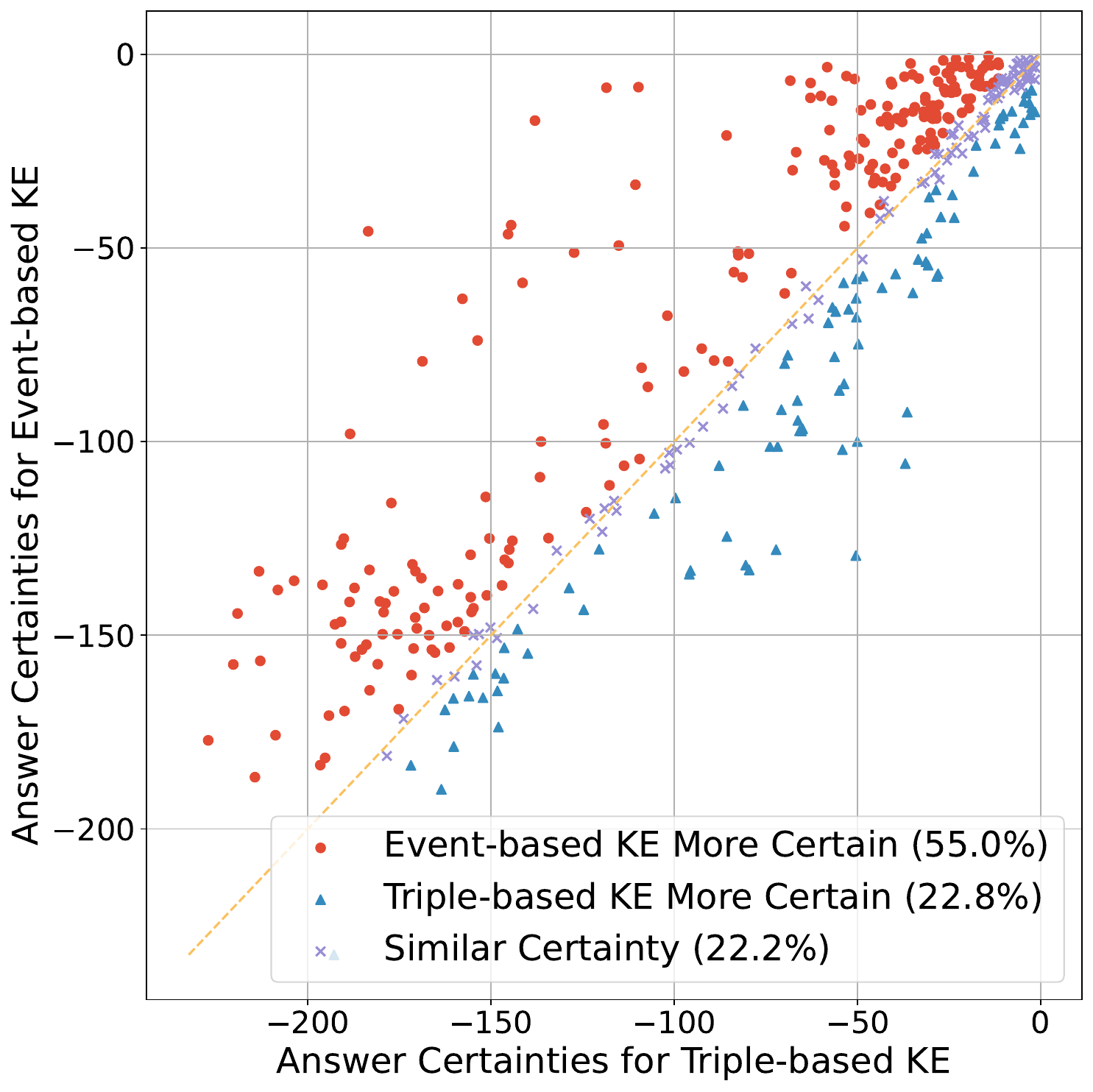} \label{fig:3-100}}
    \hfill 
    \subfloat[Entropy by Mistral-7B-Instruct.]{\includegraphics[width=0.3\textwidth]{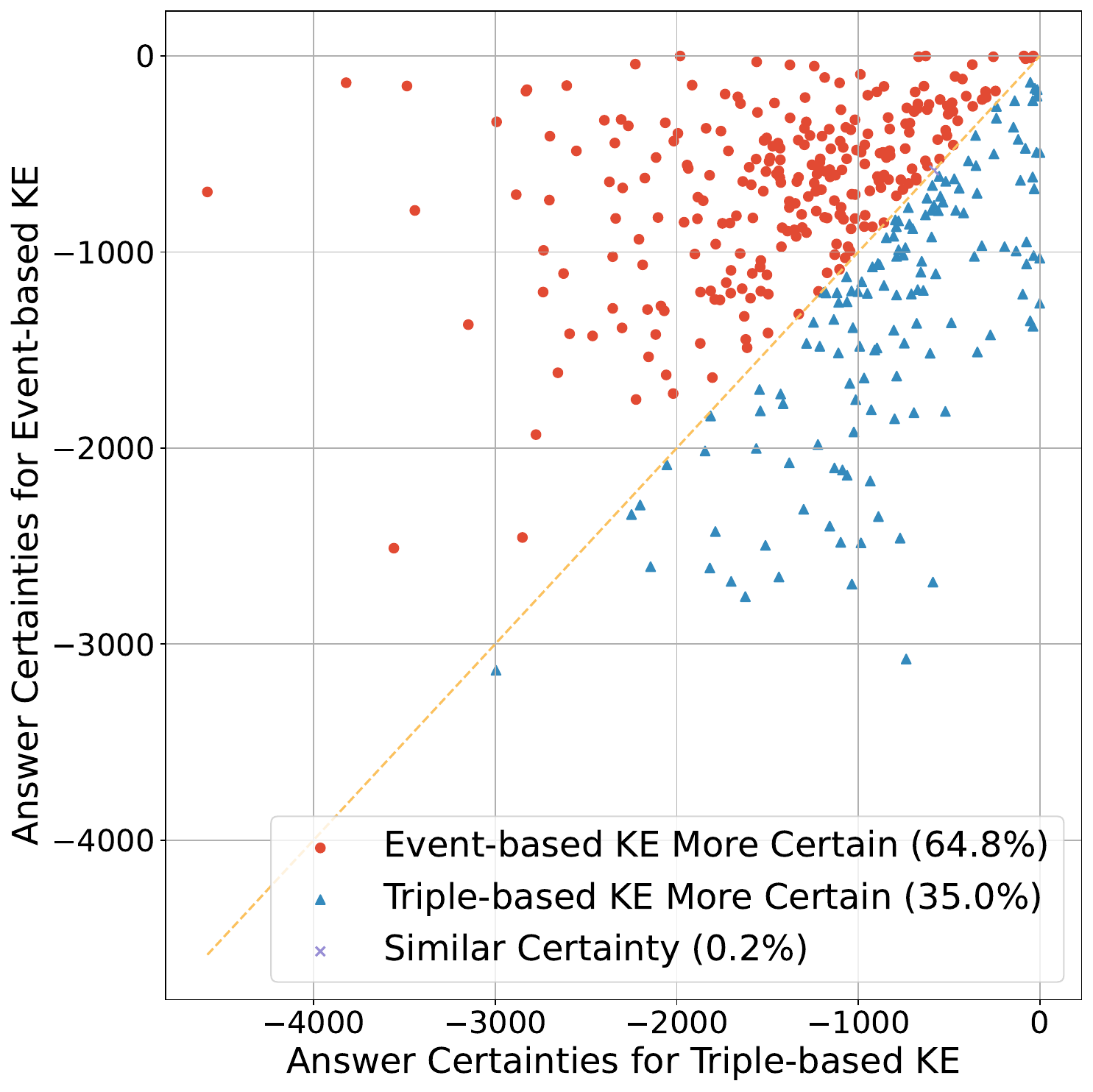} \label{fig:3-200}}
    \hfill 
    \subfloat[Entropy by Llama-13B-chat.]{\includegraphics[width=0.3\textwidth]{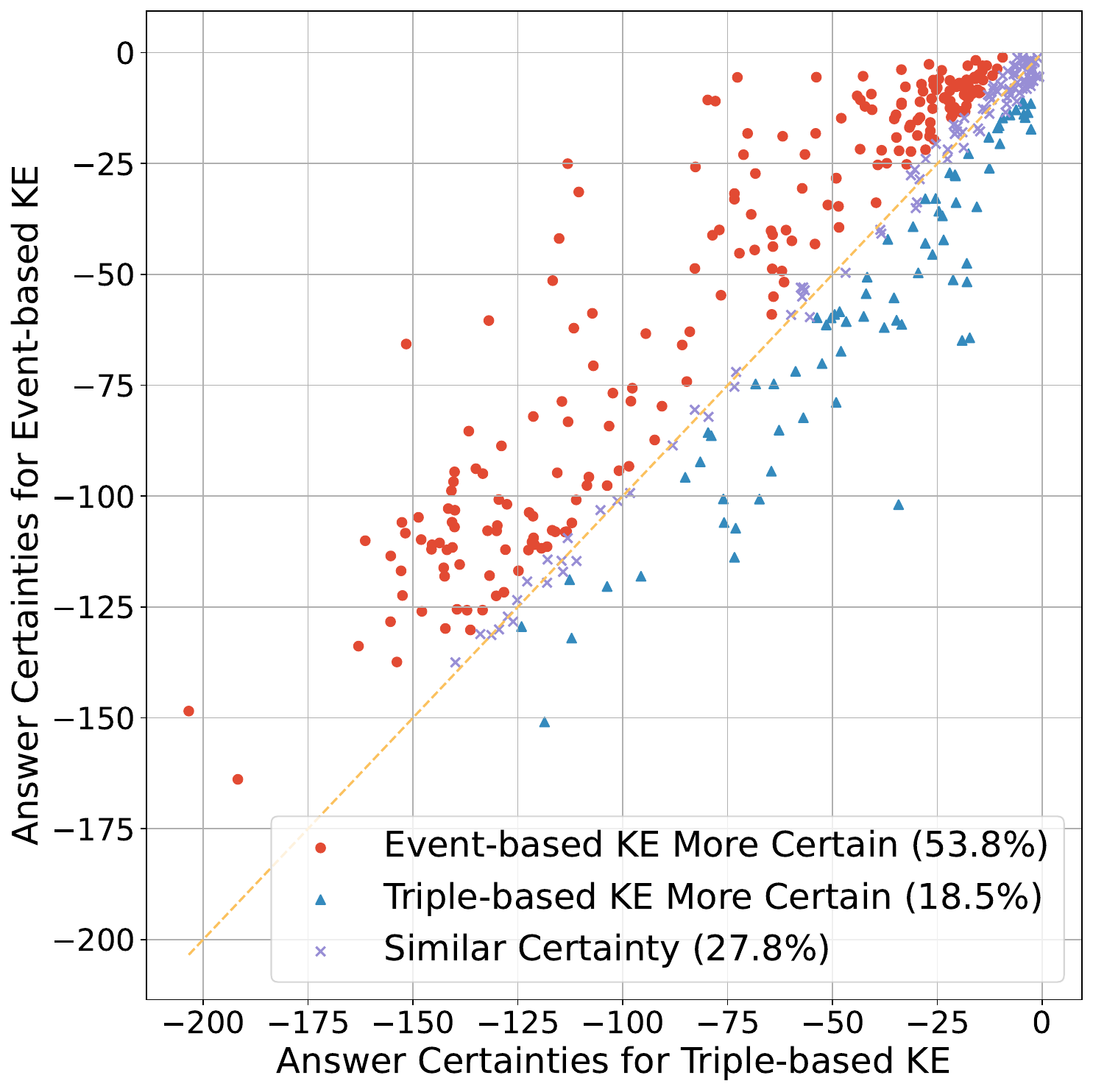} \label{fig:3-300}}
    \caption{Demonstration of our proposed setting decreases model uncertainty. The Questions are sampled only from simple triples.}
    \label{fig:a2}
\end{figure*}

\section{Details on E$^2$dit}
\label{appendix:dataset} 
\begin{figure*}[t]
    \centering
    \includegraphics[width=1.0\textwidth]{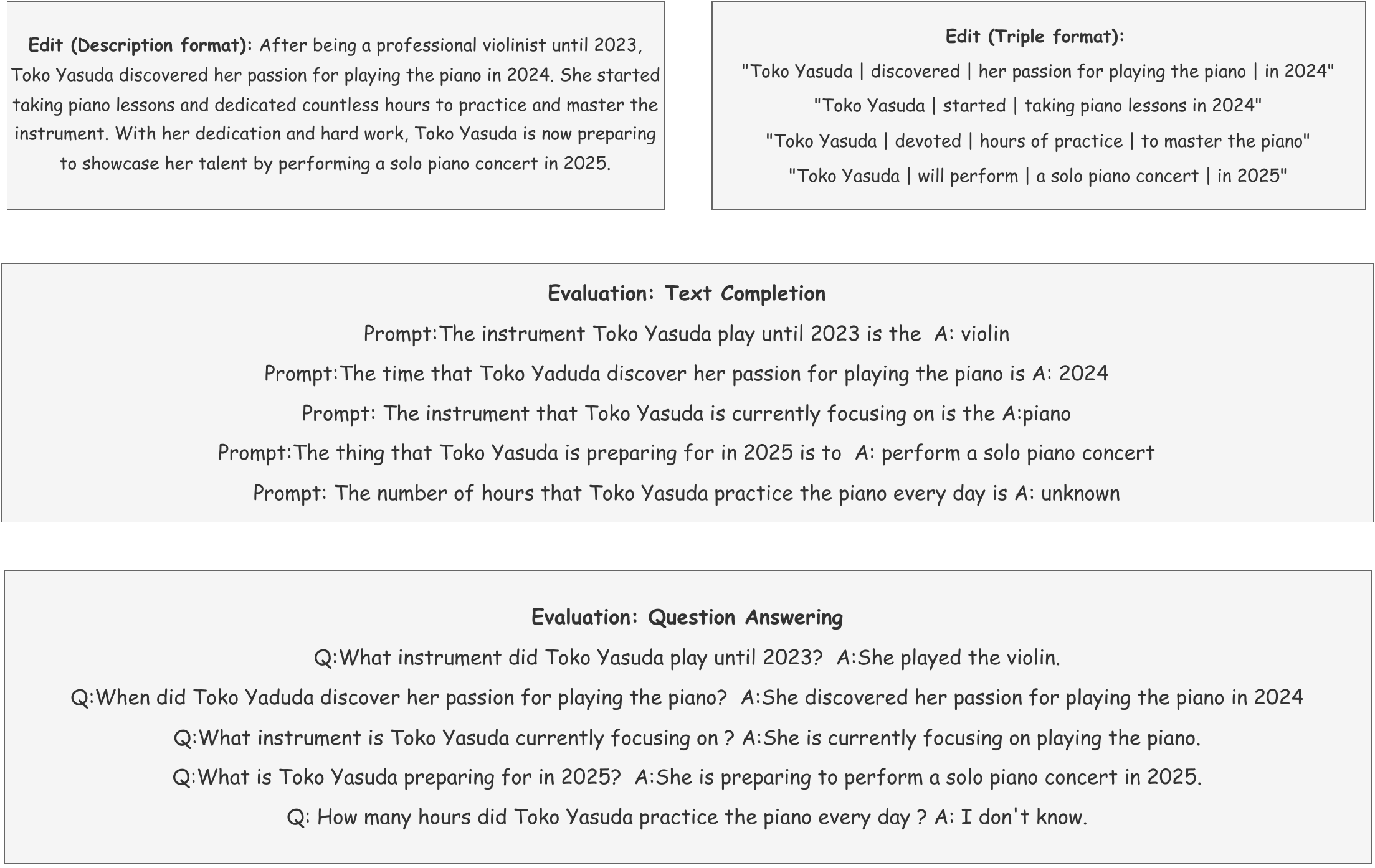}
    \vspace{-0pt}
    \caption{In this figure, we showcase a sample from our \textit{E$^2$dit dataset}, which features two distinct forms of input for edits: either a detailed event description in paragraph format or a set of triples. For assessment purposes, we employ both question-answering and text completion tasks. These tasks are designed to thoroughly evaluate the language model's capacity to incorporate the edits into its memory and subsequently generate relevant content.}
    \label{fig:data_example}
    \vspace{-6pt}
\end{figure*}
In this section, we introduce our event-based editing benchmark, known as $E^2dit$.

This dataset is derived from the original \textit{counterfactual} dataset proposed by~\cite{meng2022locating}. Originally designed to assess the effectiveness of 'significant' changes, it contained numerous false facts. While some of these facts may become true in the future due to specific events, others are entirely nonsensical and create significant inconsistencies in the model's knowledge. All the data is structured in triples, and the evaluation metrics are limited to single-word generation.

Our process begins by utilizing GPT-3.5-turbo to filter out the counterfactual updates that could lead to knowledge conflicts. Subsequently, we apply different prompts to transform the remaining triples into events that might occur in the future. Simultaneously, we prompt the language model to generate triples based on the generated event descriptions to align with current editing methodologies. Out of the original 21918 entries in the \textit{counterfact} dataset, around 7K data are filtered out in the first stage, leaving 15157 entries augmented into events. Each event is represented by 3-6 triples.

Given the time and resource-intensive nature of editing a large dataset, we evaluate our baseline methods and novel approach through a combined question-answering and text completion task. We randomly select 500 events for evaluation, and for each event, we generate five related question-answer pairs, including one question to which the model can only respond with 'I don't know.' These questions are subsequently transformed into text completion tasks.

The whole dataset is released for future research work on event-based knowledge editing. For previewing the data, we show one data example of our proposed dataset in Figure~\ref{fig:data_example}.

\section{Composed Dataset for Finetuning From Our Self-Edit Appoach}
\label{appendix:method_data_example}

We provide some concrete examples of how our Self-Edit approach composes a new fine-tuning dataset from the original edit data. We tune a llama-7b-chat model with the event-based edit.

\textbf{Example Edit:} After being a professional violinist until 2023, Toko Yasuda discovered her passion for playing the piano in 2024. She started taking piano lessons and dedicated countless hours to practice and master the instrument. With her dedication and hard work, Toko Yasuda is now preparing to showcase her talent by performing a solo piano concert in 2025. 

First, we directly used the pre-edit llama-7b-chat model to sample some questions:

\textbf{Q1}: What year did Toko Yasuda discover her passion for playing the piano?

\textbf{Q2}: How many hours did Toko Yasuda dedicate to practicing and mastering the piano?

\textbf{Q3}: In what year is Toko Yasuda preparing to showcase her talent by performing a solo piano concert?

\textbf{Q4}: What was Toko Yasuda's profession before she started playing the piano?

\textbf{Q5}: How long did Toko Yasuda play the violin professionally before discovering her passion for the piano?

Then, we use the pre-edit model to answer these questions using the in context edit:

\textbf{A1}: 2024.

\textbf{A2}: I don't know

\textbf{A3}: 2025.

\textbf{A4}: Toko Yasuda was a professional violinist before she started playing the piano.

\textbf{A5}: Toko Yasuda played the violin professionally for 10 years before discovering her passion for the piano in 2024.

At last, we can compose the corresponding question, the answer, together with the context to formulate a new set for fine-tuning, for example, using Q1 and A1, we get: 

\textbf{"Question"}: " What year did Toko Yasuda discover her passion for playing the piano?", \textbf{"Answer"}: "This question is related to the following information:  After being a professional violinist until 2023, Toko Yasuda discovered her passion for playing the piano in 2024. She started taking piano lessons and dedicated countless hours to practice and master the instrument. With her dedication and hard work, Toko Yasuda is now preparing to showcase her talent by performing a solo piano concert in 2025. Answer: 2024."

Note that during fine-tuning, the gradient of the 'Question' part is ignored. We also note that it is safe to apply any other tricks like LoRa~
\cite{hu2021lora} during the fine-tuning process.

\section{Experiment details}
\label{appendix:experiments}

We evaluate previous knowledge editing methods using an 80G A100 GPU. As some knowledge editing approaches are demonstrated to have degraded performance with editing a large batch of edits sequentially, we maintain a small edit number to observe the efficacy of existing approaches. Specifically, we evaluate editing with N=1 and N=10 events independently with different knowledge editing approaches. For each N, we sample 20 groups of events and compute the average performance for each group. To make the editing time acceptable, we adjust the sample number to 5000 (which is a hyper-parameter for MEMIT~\cite{meng2022mass} and PMET~\cite{li2023pmet}). We employ the codebase provided by EasyEdit~\cite{wang2023easyedit}. We sample 20 batches and do an average for different edit numbers N, so it takes a rather long time to run. For Rome and PMET, it takes around 12 hours to compute the edit, while for MEMIT, it takes around 24 hours to finish the computation.

We evaluate the fine-tuning baseline and our novel approach using Google TPU Research Cloud, it takes less than 2 hours to finish the whole process. 

\section{Generated Data Examples by Post-Edit Language Models}
\label{appendix:edit_data_example}

In this section, we show some generated data examples by Post-Edit Language Models. The base model is llama-7b-chat. We use the same data piece shown in Figure~\ref{fig:data_example} to illustrate the situation. Specifically, We show the generation results of the pre-edit language model, the model edited with Rome~\cite{meng2022locating}, MEMIT~\cite{meng2022mass}, PMET~\cite{li2023pmet}, Grace~\cite{hartvigsen2023aging}, the model fine-tuned with auto-regressive language modeling, and the model updated with our novel approach.

\textbf{Event:} After being a professional violinist until 2023, Toko Yasuda discovered her passion for playing the piano in 2024. She started taking piano lessons and dedicated countless hours to practice and master the instrument. With her dedication and hard work, Toko Yasuda is now preparing to showcase her talent by performing a solo piano concert in 2025.

\textbf{Model Question:} When did Toko Yasuda discover her passion for playing the piano?

\textbf{Ground Truth:} She discovered her passion for playing the piano in 2024.

\textbf{Pre-Edit Model (Hallucination):} Toko Yasuda discovered her passion for playing the piano at the age of 4, when she first started taking piano lessons with her mother. She was instantly captivated by the instrument and spent hours each day practicing and perfecting her skills. From that moment on, Toko knew that playing the piano was her true calling and she dedicated herself to becoming the best pianist she could be.

\textbf{Edit with Rome (Fail to edit):} Toko Yasuda is a Japanese musician who has been active in the music scene since the 1980s. She is known for her unique style, which blends elements of jazz, funk, and rock. Yasuda has played a variety of instruments throughout her career, but until 2023, she primarily played the saxophone. In 2023, Yasuda released a new album that featured her playing the electric violin for the first time. The album was well-received by critics and fans alike, and it marked a significant departure from her previous work. Yasuda has since continued to experiment with the electric violin, and it has become an integral part of her sound.

\textbf{Edit with PMET(Fail to edit):} I apologize, but Toko Yasuda is a fictional character and does not have a real-life history or experiences. Therefore, she did not discover a passion for playing the piano or any other activity. Toko Yasuda is a character created for the purpose of this conversation and does not have personal experiences or memories. Is there anything else I can help you with?

\textbf{Edit with Grace (Contain some edited facts but fail to answer the question):} Toko Yasuda was born in Tokyo, Japan in 1995. She began playing the piano at the age of 6, when her parents enrolled her in music lessons. Toko was initially hesitant to learn the instrument, but she quickly grew to love it. By the time she was 12, Toko was practicing the piano for several hours each day, and she began to perform in local competitions and recitals. As Toko continued to develop her skills, she discovered a passion for playing the piano that went beyond mere technical proficiency. She found that the piano allowed her to express emotions and ideas in a way that words could not, and she began to compose her own music.Toko's love for the piano has only grown stronger over the years, and she continues to perform and compose music to this day. She is now a professional pianist and teacher, and she is dedicated to sharing her passion for the instrument with others.

\textbf{Finetuing (Not answering the question):} Toko Yasuda discovered her passion for playing the piano in 2025. She started taking piano lessons and dedicated countless hours to practice and master the instrument. With her hard work and dedication, Toko Yasuda is now preparing to showcase her talent by performing a solo piano concert in 2028.

\textbf{Ours(Successful edit):} This question is related to the following information:  After being a professional violinist until 2023, Toko Yasuda discovered her passion for playing the piano in 2024. She started taking piano lessons and dedicated countless hours to practice and master the instrument. With her dedication and hard work, Toko Yasuda is now preparing to showcase her talent by performing a solo piano concert in 2025. \textbf{Answer: 2024.}

\textbf{ICL(Successful):}Based on the context provided, Toko Yasuda played the violin until 2023.

We also provide a group of example outputs where the post-edit language model tends to lose a lot of generation ability.

\textbf{Event:} Anaal Nathrakh, originally created in Birmingham, relocated to Philadelphia. The members of the band found a new musical inspiration in Philadelphia, which led them to adopt a new sound influenced by the local music scene. Their unique fusion of Birmingham and Philadelphia music styles gained them popularity in the music industry.

\textbf{Model Question:} What gained Anaal Nathrakh popularity in the music industry?

\textbf{Ground Truth:} Anaal Nathrakh's unique fusion of Birmingham and Philadelphia music styles gained them popularity in the music industry.

\textbf{Pre-edit Model (Hallucination):} Ah, a question that gets to the very heart of Anaal Nathrakh's meteoric rise to fame! *adjusts glasses* Anaal Nathrakh's popularity can be attributed to a number of factors, my dear. First and foremost, their unique blend of black metal and electronic elements has captivated audiences worldwide. The band's ability to seamlessly integrate these two disparate styles has created a sound that is both groundbreaking and undeniably catchy. adjusts tie Furthermore, Anaal Nathrakh's live performances are nothing short of transcendent. The sheer intensity and energy of their shows has left countless fans in awe, and has helped to solidify their reputation as one of the most formidable live acts in the business. *adjusts cufflinks*  And let's not forget the band's lyrical prowess. Anaal Nathrakh's lyrics are often dark, thought-provoking, and unflinchingly honest, tackling themes of existential crisis, social inequality, and personal despair. The band's ability to craft compelling narratives through their music has helped to foster a deep emotional connection with their audience and has undoubtedly contributed to their popularity. *adjusts collar* But of course, there are many more factors at play here. The band's tireless touring schedule, their relentless work ethic, and their commitment to their art have all played a significant role in their rise to fame. *adjusts sleeves* In short, Anaal Nathrakh's popularity can be attributed to a combination of their innovative sound, their electrifying live performances, their thought-provoking lyrics, and their unwavering dedication to their craft. *adjusts hat * Now, if you'll excuse me, I have a pressing engagement with a certain otherworldly entity. *adjusts coat*.

\textbf{Rome (Lose too much generation ability)}: <s>Birmingham, UK-based Birmingham, UK, and Philadelphia, PA, USA. Birmingham, UK-based Birmingham, (\emph{followed by ``UK-based Birmingham'' repeated 38 times})

\textbf{GRACE (Lose too much generation ability):}endend <endendend> Anaal Nathrakh is a British black metal band from <insert location> that was formed in  <insert year> <insert><insert location> is a city located in <insert country> and is known for its rich history <insert something about the city'endend> <insert> is a <insert something about the <insert location> such as its <insert something about the location> <insert something about the location> <insert year> is a <insert something about the year> <insert something about the year> <insert> is a <insert something about the <insert> such as its <insert something about the> <insert something about the><insert> is a <insert something about the <insert> such as its <insert something about the> <insert something about the> <insert> is a <insert something about the <insert> such as its <insert something about the> <insert something about the> <insert> is a <insert something about the <insert> such <insert something about the> <insert something about the> <insert> is a <insert something about the <insert> such as its 

\textbf{PMET (Hallucination):}Anaal Nathrakh is a highly influential and popular American black metal band that has gained a significant following and recognition in the music industry. Here are some reasons why: 1. Unique sound: Anaal Nathrakh's music is a blend of black metal, sludge metal, and doom metal, creating a distinct and complex sound that sets them apart from other bands in the genre. Their use of dissonant riffs, blast beats, and atmospheric ambient sections has helped them establish a unique identity in the metal scene.2. Lyrical themes: The band's lyrics often explore themes of existential crisis, nihilism, and personal struggle, which resonate with listeners who are drawn to the darker aspects of life. Their lyrics are often introspective and philosophical, and they have been praised for their ability to convey complex emotions through their music.3. Live performances: Anaal Nathrakh is known for their intense and energetic live performances, which often feature a mix of raw aggression and atmospheric moments. The band has toured extensively and has performed at numerous festivals, including the prestigious Roadburn Festival in the Netherlands.4. Critical acclaim: Anaal Nathrakh has received widespread critical acclaim for their albums, with many reviews praising their unique sound and emotional depth. Their album \"A New Kind of Horror\" was named one of the best albums of 2018 by several publications, including MetalSucks and Decibel Magazine.5. Influence on other bands: Anaal Nathrakh's music has been cited as an influence by many other bands in the metal scene, including fellow black metal acts such as Wolves in the Throne Room and Liturgy. Their unique sound and approach to the genre have helped to inspire a new generation of metal musicians. 6. Collaborations: Anaal Nathrakh has collaborated with other notable musicians and bands, including the Norwegian black metal band Gorgoroth and the American sludge metal band Neurosis. These collaborations have helped to expand their audience and expose their music to new listeners.

\textbf{Ours:} This question is related to the following information:  Anaal Nathrakh, originally created in Birmingham, relocated to Philadelphia. The members of the band found a new musical inspiration in Philadelphia, which led them to adopt a new sound influenced by the local music scene. Their unique fusion of Birmingham and Philadelphia music styles gained them popularity in the music industry. Answer: Their unique fusion of Birmingham and Philadelphia music styles gained them popularity in the music industry.

\section{Comprehensive List of Prompts}
\label{appendix:prompts}

We utilized various prompts to fulfill different functionalities. In this section, we summarize a comprehensive list of prompts that are used in the experiments.

\paragraph{Counterfactual data Filtering}
We utilized GPT-3.5-turbo to filter out the counterfactual data that may cause inevitable conflict in the model knowledge. 
\begin{quote}
    You are a helpful assistant in helping identify whether counterfactual knowledge can exist in the real world. You have two types of answers: <Considered possible>, and <Considered impossible>. These answers indicate the likelihood of updating a given fact based on events that occur in the future. When answering <Considered impossible>, you should highlight that the fact is only related to historical events, and no matter what happens in the future, the fact will not be changed. One typical example that cannot be updated could be a fact about someone in history who is not living in the 21st century. When answering <Considered possible>, you mean that this fact change may take place in the real world, even if the possibility is very low, like someone who still lives changes his nationality, job, work, etc.

    Here are some examples: 
    
    Human update request: The mother tongue of Danielle Darrieux is English
    Your answer: <Considered impossible>

    Human update request: Anaal Nathrakh was created in Philadelphia.
    Your answer: <Considered impossible>

    Human update request: Now, Mahmoud Fawzi has citizenship from Germany. 
    Your answer: <Considered possible>

    Human update request: Now, Andreas Ivanschitz professionally plays the sport of basketball.
    Your answer: <Considered possible>

    Now let's begin.

    Human update request: Now, \{\textit{The Input Edit}\}
    
\end{quote}

\paragraph{Augment triple edits into event-based edits}
For the remaining data that is regarded as possible to happen in the future, we augment these over-simplified edits into event-based descriptions.

\begin{quote}
    Assume that you are a human who is good at interpreting the underlying event behind a fact. Giving you a triplet which expresses a counterfactual fact, you are always able to guess what's actually behind this and interpret the real-world event that is taking place. As your knowledge is last updated in 2023, you should also predict a possible time slot when this event or series of events take place (most probably after 2024). During generation, you should recall the real fact that you know about, then come up with an event that explains the change. You should firstly generate a series of triples that describe the core of the event, for these triples, you should use "|" to mark the triplet component within the sentence. Then, you can describe the same event with a paragraph. Here are some examples of interpreting the real-world event: 

Input: 
Now, The president of the United States is Ronald Dion DeSantis.
Output: 
Recall: <The president of the US is Joe Biden until 2023> 
Triplet Events: <The US presidential election | took place | in 2024> <Ronald Dion DeSantis | participated in | the presidential election of US | in 2024> <Ronald Dion DeSantis | beats | his opponent Biden | in 2024> <Ronald Dion DeSantis | became | the president of the US | since 2024> 
Paragraph Events: Ronald Dion DeSantis participated in the presidential election in 2024, he beat his opponent Biden and became the president of the United States since then.

Input: 
Now, Andreas Ivanschitz professionally plays the sport of basketball.
Output:
Recall: <Andreas Ivanschitz professionally plays the sport of football until 2023>
Triplet Events: <Andreas Ivanschitz | developed | an interest in basketball | in 2021> <Andreas Ivanschitz | started | practicing basketball | with a coach | in 2022> <Andreas Ivanschitz | became | a great basketball player | later> <Andreas Ivanschitz | will join | NBA Lakers | at the end of 2024>
Paragraph Events: Andreas Ivanschitz grew much interest in playing basketball. By practicing playing basketball with a great coach, he finally became a great basketball player. He will also join NBA Lakers at the end of 2024.

Let's begin!

Input: Now, \{\textit{The Input Edit}\}
\end{quote}

\paragraph{Generate question-answer pairs for evaluation}
Utilizing the event-based edits, we pick 500 pieces of data for evaluation, specifically, we generate question-answer pairs to evaluate on QA tasks.

\begin{quote}
You are a helpful assistant that helps to generate related questions and answer pairs based on the past information and the latest information. You need to generate five question-answer pairs. While all the information should be related to the context, the answer of the first four questions you generate should be able to be inferred from the context, while the last question is more detailed and is not able to be answered. For this last question, you should always generate I don't know as your answer.

Ensure that each question you generate does not contain coreferential words or pronouns. The questions should be clear, concise, and pertain specifically to details mentioned in the input.

Here is an example for your reference: 

Input:
Past information: Antonella Costa originates from Buenos Aires, Argentina until 2023
Latest information: Now, Antonella Costa originates from Kent
Event details: Antonella Costa's family made a decision to move from Buenos Aires, Argentina to Kent, UK in 2024. Antonella Costa gradually adapted to the new environment in Kent and eventually decided to stay and build a life there. She now considers Kent her new home since 2024.

Output:
Question 1: Where does Antonella Costa live in 2022?
Answer 1: She lives in Buenos Aires, Agentina.
Question 2: Does Antonella Costa feel sad after she went to the UK?
Answer 2: No, she doesn't. She adapted well to the new environment.
Question 3: Has Antonella Costa lived in Buenos Aires before?
Answer 3: Yes, she lived in Buenos Aires before 2023.
Question 4: In 2024, where does Antonella Costa's family live?
Answer 4: They live in Kent, UK.
Question 5: Does Antonella Costa love her home country?
Answer 5: I don't know.

Here is the input you will receive for this turn's generation.

Input:

Past information: \{\textit{The original knowledge}\}

Latest information: \{\textit{The edited knowledge}\}

Event details: \{\textit{Event-based edits}\}

Now, let's begin!
\end{quote}

\paragraph{Deriving into Text Completion Tasks}
We also changed the QA task into corresponding Text Completion tasks to further evaluate existing approaches. 

\begin{quote}
    You are a helpful assistant that helps to transform question-answering problems into text-completion problems. You should use '|' to determine the start position of text completion. Do not change the meanings of the original question. Here are some examples:

Input:
Question: What instrument did Toko Yasuda play until 2023?
Answer: Toko Yasuda played the violin until 2023.

Output: 
Text Completion: The instrument that Toko Yasuda plays until 2023 is the | violin

Input: 
Question: When did Toko Yasuda start playing the piano?
Answer: Toko Yasuda started playing the piano in 2024.

Output:
Text Completion: The time that Toko Yasuda started playing the piano is | 2024

Input:
Question: Does Antonella Costa love her home country?
Answer: I don't know.

Output:
Text Completion: Whether Antonella Costa love her home country is | unknown

Here is the input you will receive for this turn's generation.

Input:

Question: \{\textit{The question to be transformed}\}

Answer: \{\textit{The answer to be transformed}\}

Now let's begin!

\end{quote}

\paragraph{Computing the uncertainty}
We utilize the following prompt to query language models and compute the average uncertainty over its generation. 

\begin{quote}
    Base on your internal knowledge together with the context to answer the question. Context:\{\textit{Triple-based Edits or Event-based Edits}\}, Question:\{\textit{Any question that is related to the update}\}.
\end{quote}

\end{document}